\documentclass{article}

\usepackage{microtype}
\usepackage{graphicx}
\usepackage{subcaption}
\usepackage{booktabs} %

\usepackage{hyperref}

\usepackage[accepted]{icml2026}

\usepackage{amsmath}
\usepackage{amssymb}
\usepackage{mathtools}
\usepackage{amsthm}
\usepackage{proof}

\usepackage[T1]{fontenc}
\usepackage[utf8]{inputenc}
\usepackage[scaled=0.9]{DejaVuSansMono}    %
\usepackage{listings}

\usepackage{xcolor}
\definecolor{keywordcolor}{HTML}{4169e1}   %
\definecolor{tacticcolor}{HTML}{4169e1}    %
\definecolor{commentcolor}{HTML}{2e8b57}   %
\definecolor{symbolcolor}{HTML}{000000}%
\definecolor{sortcolor}{HTML}{4169e1}      %
\definecolor{attributecolor}{HTML}{f75394} %
\definecolor{bgcolor}{gray}{0.95}

\usepackage[capitalize,noabbrev]{cleveref}

\newcommand{\NN}{\mathbb{N}}
\newcommand{\EE}{\mathbb{E}}
\newcommand{\PP}{\mathbb{P}}
\newcommand{\dd}{\mathrm{d}}
\newcommand{\eps}{\varepsilon}

\newcommand{\calX}{\mathcal{X}}

\newcommand{\calA}{\mathcal{A}}

\newcommand{\calH}{\mathcal{H}}
\newcommand{\calR}{\mathcal{R}}

\DeclareMathOperator*{\argmin}{arg\,min}
\DeclareMathOperator*{\argmax}{arg\,max}
\DeclareMathOperator{\lip}{Lip}

\theoremstyle{plain}
\newtheorem{theorem}{Theorem}[section]
\newtheorem{proposition}{Proposition}[section]
\newtheorem{lemma}{Lemma}[section]
\newtheorem{corollary}{Corollary}[section]

\newtheorem{theorem*}{Theorem}
\newtheorem{lemma*}{Lemma}

\crefname{example}{Example}{Examples}

\theoremstyle{definition}

\crefname{definition}{Definition}{Definitions}

\newtheorem{assumption}{Assumption}[section]
\crefname{assumption}{Assumption}{Assumptions}

\theoremstyle{remark}

\crefname{remark}{Remark}{Remarks}

\usepackage[textsize=tiny]{todonotes}

\icmltitlerunning{Why Agentic Theorem Prover Works: A Statistical Provability Theory of Mathematical Reasoning Models}

\begin{document}

\twocolumn[
  \icmltitle{Why Agentic Theorem Prover Works:\\A Statistical Provability Theory of Mathematical Reasoning Models}

  \icmlsetsymbol{equal}{*}

  \begin{icmlauthorlist}
    \icmlauthor{Sho Sonoda}{ca,riken}
    \icmlauthor{Shunta Akiyama}{ca}
    \icmlauthor{Yuya Uezato}{ca,nii}
  \end{icmlauthorlist}

  \icmlaffiliation{ca}{CyberAgent, Inc.}
  \icmlaffiliation{riken}{RIKEN}
  \icmlaffiliation{nii}{National Institute of Informatics, Japan}

  \icmlcorrespondingauthor{Sho Sonoda}{sho.sonoda@riken.jp}

  \icmlkeywords{agentic theorem proving, Markov decision process, statistic provability}

  \vskip 0.3in
]

\printAffiliationsAndNotice{}  %

\begin{abstract}
Agentic theorem provers combine a reasoning model, retrieval, search, and a proof assistant verifier, yet it remains unclear which components actually improve finite-budget proof success and why they help on real mathematical workloads. We study this question through \emph{statistical provability}: the probability of reaching a verified proof within a budget on a specified stream of theorem instances. We model formal proof search as a finite-horizon reachability MDP with deterministic verifier dynamics, and show that under a faithful state abstraction the optimal success probability coincides with ordinary syntactic provability. We then analyze a simple but practically important pipeline: depth-wise offline action-value regression followed by greedy test-time proving. Our main theorem bounds the provability gap between the learned prover and the optimal prover by an occupancy-weighted sum of uniform action-value errors; in the common uniform-error reading, the leading complexity multiplier is the learned prover's average truncated proof length. The error decomposes into approximation error, geometric coverage of the training distribution, and Monte Carlo label noise, and improves to a fast rate under an action-gap margin condition. The result gives a component-sensitive account of why verifier feedback, retrieval, representation geometry, and proof-shortening mechanisms help on biased theorem workloads, without contradicting classical worst-case hardness.
\end{abstract}

\section{Introduction}

Large language models have improved rapidly at mathematical reasoning,
and recent systems increasingly translate that capability into
\emph{verified} proofs by combining an LLM policy with retrieval, search,
and a proof assistant verifier
\citep{Polu2020gen,zheng2022miniff,yang2023leandojo,jiang2023draft,xin2024deepseekprover,Ren2025deepseekprover-v2,baba2025proveragent,bytedance2025seedprover,varambally2025hilbert,harmonic2025aristotle}.
These agentic theorem provers are empirically effective, but the usual
theoretical vocabulary does not explain their behavior well. Classical
logic and proof theory ask whether a proof exists. Complexity theory asks
how hard proof search can be in the worst case. A deployed prover faces a
different question: \emph{how likely is this particular algorithm to
generate a verifier-accepted proof within a fixed compute budget on the
problems it actually receives?}

This distinction matters. In standard proof theory, provability means the
existence of a finite proof diagram in a formal calculus: for assumptions
\(\Gamma\), a conclusion \(\varphi\), and a proof system \(K\), the
judgement \(\Gamma\vdash_K\varphi\) is true if such a diagram exists.
That is the right logical notion, and we do not replace it. But it is not
enough to explain why an LLM-guided prover works. A proof may exist while
a finite-budget search procedure almost never finds it; conversely, a
learned prover may succeed often on the theorem families it is trained
for even though unrelated worst-case instances remain hard.

The mechanism we study is simple. Real theorem-proving workloads are not
uniform samples from all formal statements of a given size. They are
biased toward library theorems, benchmark families, reused definitions,
and recurring proof patterns. Training traces and verifier interaction
expose this bias: they show which actions tend to make progress on the
states that actually occur. If a learned scorer estimates those local
success probabilities accurately on the high-mass region visited by the
prover, greedy or search-based proving can have high finite-budget
success probability without contradicting undecidability or worst-case
lower bounds.

We call this algorithmic success probability \emph{statistical
provability}. For a problem stream \(q_0\), a verifier-call budget \(B\),
and a prover policy \(\pi\), it is the probability that the prover reaches
a verified proof within \(B\) calls, averaged over \(X_0\sim q_0\) and
over the prover's own randomness. This quantity depends on both the
problem bias and the algorithm. It is therefore the object one needs in
order to ask why an agentic prover works on its intended workload.
Alongside success probability, we track the prover's average truncated
proof length: the expected number of verifier calls made before success
or budget exhaustion. This is directly readable from proof traces and is
the main interpretability handle in the bound: for a fixed local
value-estimation error, shorter average proofs imply smaller end-to-end
loss.

Our analysis formalizes this question with a finite-horizon reachability
MDP. The state is the current proof obligation together with verifier
feedback, the action is the next prover step, and the transition is the
verifier's deterministic response. The Bellman structure of this MDP
turns successful proving into an action-value estimation problem. We
analyze a simple pipeline aligned with current practice: learn depth-wise
action-value functions offline from verifier-backed rollouts, then prove
greedily at test time using the learned scores. The resulting theorem
shows how approximation error, representation geometry, data coverage,
rollout budget, and action-gap margins affect end-to-end proof success.

\paragraph{Contributions.}
\begin{itemize}
\item
  We formalize agentic theorem proving as a finite-horizon reachability
  MDP and define \emph{statistical provability} as the finite-budget
  probability that a specified prover generates a verified proof on
  \(X_0\sim q_0\).
\item
  We show that, under a faithful abstraction of proof states, optimal
  MDP success is exactly syntactic provability within the same verifier
  budget. This keeps the logic intact while enabling value-based
  analysis.
\item
  We analyze \emph{offline action-value regression + greedy proving} and
  derive a provability bound whose leading multiplier is the learned
  prover's average truncated proof length and whose statistical term
  separates approximation error, coverage geometry, and Monte Carlo label
  noise.
\item
  We interpret the theorem in terms of concrete design choices:
  retrieval, verifier feedback, representation learning, and
  proof-shortening mechanisms help by reducing average proof length,
  improving margins, or localizing the learning problem to better
  covered regions.
\end{itemize}

\section{Proof Search as a Reachability MDP}
\label{sec:setup}

\paragraph{Ordinary proof systems.}
Fix a formal proof system \(K\). At the logical level, one can think of
\(K\) as specifying a language of formulas, a set of axioms, and a set of
inference rules. A \(K\)-proof of a conclusion \(\varphi\) from
assumptions \(\Gamma\) is a finite proof tree whose leaves are axioms or
assumptions, whose internal nodes follow the inference rules, and whose
root is \(\varphi\). When such a proof tree exists, we write
\(\Gamma\vdash_K\varphi\). This is the standard proof-theoretic notion of
\emph{provability} used in logic.

\paragraph{Proof assistants and proof-search states.}
A proof assistant implements such a formal system together with a trusted
verifier. Lean \citep{deMouraUllrich2021lean4} is a representative example: its kernel checks proof
objects, while users and agents usually interact through tactics. During
backward proof search, the visible state is a finite list of open goals,
each of the form ``under local context \(\Gamma_i\), prove
\(\varphi_i\).'' A tactic transforms this list into a new list of goals,
and the proof is complete when the list is empty. Thus a proof assistant
already exposes the interface needed for a sequential decision problem:
current proof state, proposed action, verifier-approved next state.
This is not specific to backward search. In forward proof search, the
state can instead be the set of formulas derived so far together with
the target, an action selects an inference-rule instance and its
premises, and the verifier transition adds the derived conclusion.
\Cref{fig:tp-as-mdp} illustrates these two views on the same modus-ponens
proof and labels the corresponding MDP components.

\begin{figure*}[t]
  \centering
  \includegraphics[width=0.8\textwidth,trim=10 90 40 0,clip]{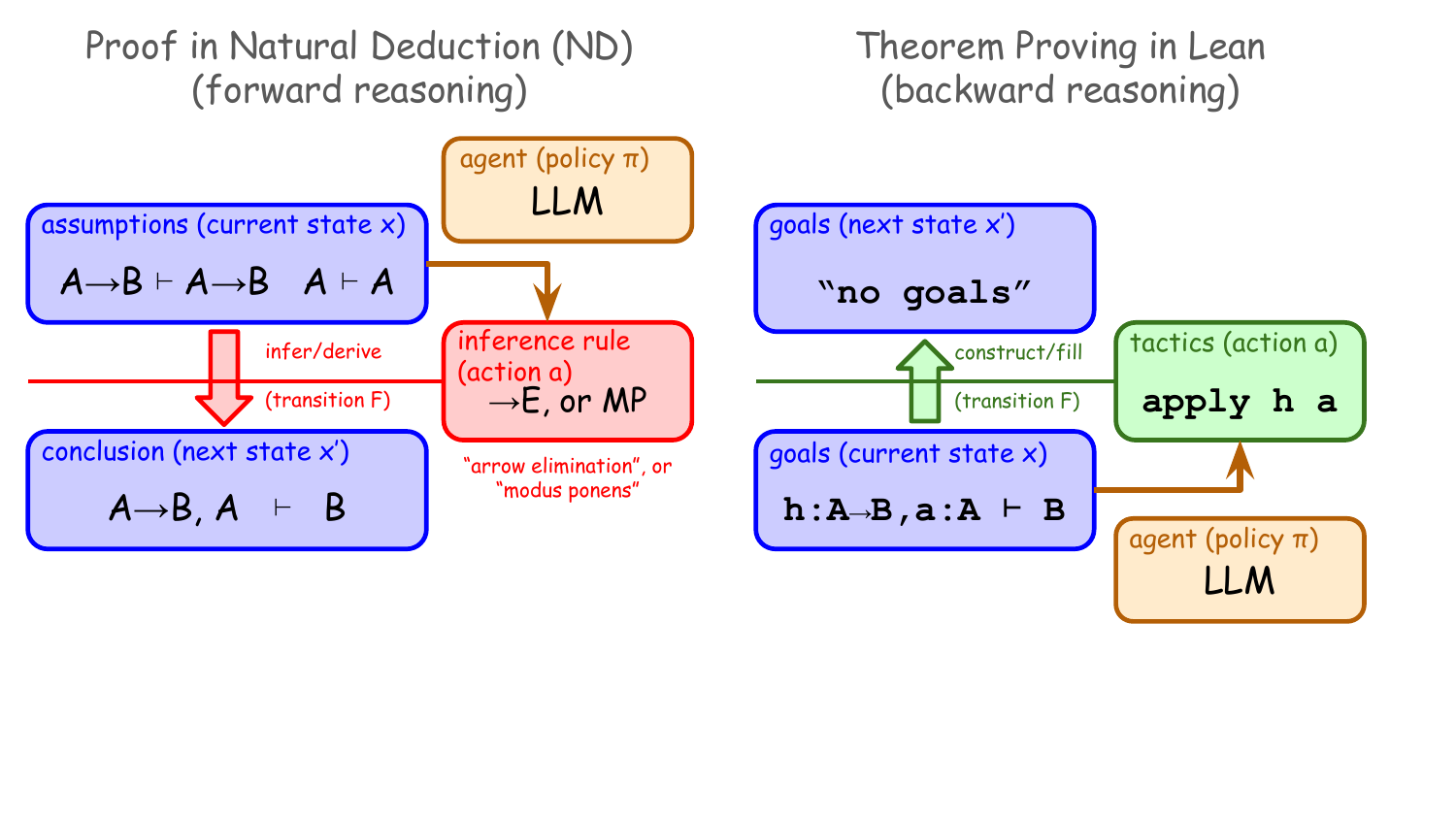}
  \caption{
  Formal proving as a verifier-defined MDP.
  Left: in forward natural-deduction search, the current state \(x\)
  records available assumptions or derived facts; an action \(a\)
  selects an inference-rule instance such as implication elimination
  (modus ponens); and the transition \(F(x,a)=x'\) adds the derived
  conclusion, here \(B\).
  Right: in Lean-style backward search, the current state \(x\) is the
  open goal \(h:A\to B,\ a:A\vdash B\); the tactic action
  \(\texttt{apply h a}\) supplies an exact proof term for that goal;
  and the verifier transition returns the solved state \(x'\) with no
  remaining goals.
  The diagram suppresses many implementation details, but the MDP
  correspondence is always state, action, verifier transition, solved
  set.
  }
  \label{fig:tp-as-mdp}
\end{figure*}

\paragraph{Verifier interface.}
We abstract the interface in \cref{fig:tp-as-mdp} as follows. Let
\(\calX\) be the proof-state representation. A state contains
the currently open obligations together with local context, metavariables,
and verifier messages. Let \(\calA\) be an action space of prover moves
such as tactics, inference-rule instances, lemma invocations, or
retrieval-conditioned commands. We assume a deterministic verifier update
\[
  F:\calX\times\calA\to \calX,
\]
where invalid actions are mapped to explicit failure states. In forward
natural deduction, for example, the action ``apply modus ponens to
\(A\to B\) and \(A\)'' maps a state containing those facts to a state
that also contains \(B\). In Lean tactic mode, tactics construct proof
terms incrementally:
\lstinline{apply} unifies the conclusion of a supplied expression with
the current goal and creates subgoals for missing arguments, while
\lstinline{exact} fills the current goal exactly. Thus a tactic command
is naturally an action whose verifier-approved result is a new proof
state. The same abstraction also covers backward sequent search, Rocq,
and Isabelle.

For a budget \(B\in\NN\), let
\(\mathrm{Prov}_B(x)\in\{0,1\}\) denote whether a closed proof can
be reached from \(x\) within at most \(B\) verifier calls. This is
the budgeted analogue of ordinary proof-theoretic provability. It is
still a pointwise logical property of the initial state and the verifier,
not a statement about whether a learned prover will find the proof.

\begin{proposition}[Verifier interfaces induce deterministic MDPs]
\label{prop:det-mdp}
Any proof-search system specified by a state space
\(\calX\), an action space \(\calA\), a verifier update map
\(F:\calX\times\calA\to\calX\), and
a solved set \(\mathsf G\subseteq \calX\)
induces a deterministic finite-horizon reachability MDP via
\[
  P(\cdot\mid x,a)=\delta_{F(x,a)}.
\]
\end{proposition}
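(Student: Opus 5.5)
The plan is to build the MDP tuple explicitly from the given data $(\calX,\calA,F,\mathsf G)$ and check each defining requirement. I take the state space to be $\calX$ and the action space to be $\calA$. If the paper's MDP definition requires action sets that depend on the state, I set $\calA(x)=\calA$; invalid actions are still allowed, because $F$ sends them to explicit failure states. I take the horizon to be the budget $B\in\NN$. The reward is the reachability indicator: terminal reward $\mathbf 1[x_B\in\mathsf G]$, or equivalently a reward of $1$ on the first entry into $\mathsf G$. To make the first-entry version well defined, I make $\mathsf G$ absorbing by redefining $\tilde F(x,a)=x$ for $x\in\mathsf G$ and $\tilde F=F$ otherwise. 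This modification does not change which states are reached from outside $\mathsf G$.

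Next I define the transition kernel by $P(\cdot\mid x,a)=\delta_{\tilde F(x,a)}$ and verify that it is a Markov kernel. For fixed $(x,a)$, the map $E\mapsto \mathbf 1[\tilde F(x,a)\in E]$ is a probability measure. For fixed $E$, the map $(x,a)\mapsto \mathbf 1_E(\tilde F(x,a))=\mathbf 1_{\tilde F^{-1}(E)}(x,a)$ is measurable provided $F$ is measurable. Proof states and actions are finite syntactic objects, so $\calX$ and $\calA$ are countable. I therefore equip them with discrete $\sigma$-algebras, which makes measurability automatic, and I would state this convention explicitly. Determinism holds by construction, since every $P(\cdot\mid x,a)$ is a Dirac measure.

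Finally, I check that this MDP encodes the intended reachability semantics. Under any deterministic policy, the trajectory from $x_0$ is $x_{t+1}=\tilde F(x_t,a_t)$ almost surely. Under any (possibly randomized, history-dependent) policy, $\PP^\pi(\text{hit }\mathsf G\text{ within }B\text{ steps})$ equals the expectation of the reachability reward. This identifies the MDP's value with the probability of reaching a verified proof within budget $B$. The one step that needs care is aligning "verifier calls" with MDP time steps, so that each action is exactly one call of $F$. This holds by construction, and it is what makes the horizon $B$ match the budget. I expect the only real obstacle to be this bookkeeping: the measure-theoretic issues disappear under the countability convention.
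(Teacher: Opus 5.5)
Your proposal is correct and follows the paper's proof. Like the paper, you take the tuple $(\calX,\calA,\delta_{F(x,a)},\mathsf G)$, note that the Dirac kernel gives the Markov property and determinism, and identify hitting $\mathsf G$ within $B$ steps with finding a proof within $B$ verifier calls; your measurability remarks are extra care the paper omits, though the countability convention is narrower than the paper's later continuous state representations, where measurability of $F$ must simply be assumed. The absorbing modification $\tilde F$ is unnecessary and departs slightly from the stated kernel $\delta_{F(x,a)}$: the paper's objective is $\PP^\pi_x(T_{\mathsf G}\le B)$, which transitions out of $\mathsf G$ never affect (the first-entry reward is already well defined), so you can keep $F$ itself.
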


This proposition is mainly a bookkeeping device. The MDP is not a new
logic; it is the verifier's step-by-step operational semantics written
in the notation needed for dynamic programming. Randomness may enter
through stochastic policies, exploration, retrieval, or decoding, but
the verifier transition itself is deterministic in the systems we
target.

\paragraph{Why whole proof states, not single goals?}
One proof step can create several subgoals that must all be discharged.
The induced search object is therefore closer to an AND-OR graph or
directed hypergraph than to a single trajectory. Modeling the state as
the full multiset of currently open goals restores the Markov property:
all future obligations are summarized by the current proof state, and
branching is handled inside the verifier update.

\paragraph{State representations and statistical provability.}
A convenient proof-state representation is a finite measure over embedded
open goals, but the main theorem only needs a compact metric
representation on the region where training and testing concentrate. For
a policy
\(\pi\), define the hitting time
\[
  T_{\mathsf G}:=\inf\{t\ge 0: X_t\in \mathsf G\},
  \qquad X_{t+1}=F(X_t,A_t).
\]
The budgeted success probability is
\[
  V_B^\pi(x):=\PP_x^\pi(T_{\mathsf G}\le B),
  \qquad
  V_B^*(x):=\sup_\pi V_B^\pi(x).
\]
Given an initial-state distribution \(q_0\in\mathcal P(\calX)\), we
define
\[
  \mathrm{SP}_B^\pi(q_0)
  :=\EE_{q_0}\left[V_B^\pi(X)\right], \quad 
  \mathrm{SP}_B^*(q_0)
  :=\EE_{q_0}\left[ V_B^*(X)\right].
\]
We call these quantities \emph{statistical provability}.
The companion length statistic is the average truncated proof length
\begin{equation}
\label{eq:avg-proof-length}
  \begin{aligned}
  \mathrm{Len}_B^\pi(q_0)
  &:=
  \EE_{q_0}^{\pi}[T_{\mathsf G}\wedge B]
  =
  \sum_{s=1}^B \PP_{q_0}^{\pi}(T_{\mathsf G}\ge s).
  \end{aligned}
\end{equation}
where the equality is the tail-sum formula. Operationally,
\(\mathrm{Len}_B^\pi(q_0)\) is the average number of verifier calls
spent by prover \(\pi\) before it either finds a proof or exhausts the
budget. Unlike worst-case proof length, it is estimated from the same
traces used to evaluate a prover, which makes it a useful quantity for
interpreting the theorem.

\paragraph{Learning setup and notation.}
We use roman letters for data-generating objects and Greek letters for
learned policies. The initial theorem distribution is \(q_0\). At
remaining horizon \(t\), the training query pairs are drawn from a
data-query distribution \(q_t\) supported on a relevant compact
state-action region \(\mathcal C_t\). The target of regression is the
optimal one-step value
\[
  Q_t^*(x,a):=V_{t-1}^*(F(x,a)).
\]
The learning side consists of a hypothesis class \(\calH_t\), an
estimator \(\widehat Q_t\in\calH_t\), and the induced greedy policy
\[
  \hat\pi_t(x)\in\argmax_{a:(x,a)\in\mathcal C_t}\widehat Q_t(x,a).
\]
We write \(T_{\mathsf G}\) for the hitting time of the solved set and
\(O_s^{\hat\pi}\) for the conditional occupancy law given non-solution.
The full statistical assumptions on \(q_t,\calH_t,\widehat Q_t\), and the
rollout labels are stated in \Cref{ass:main}.

\begin{proposition}[Faithful abstraction preserves provability]
\label{prop:faithful}
Suppose a lower-level state space \(\calR\) with verifier update
\(F_{\calR}:\calR\times\calA\to\calR\) and solved set
\(\mathsf G_{\calR}\) is represented in \(\calX\) by an encoding map
\(e:\calR\to\calX\). Assume that the encoding is faithful in the sense
that
\[
  e(F_{\calR}(r,a)) = F(e(r),a)
\]
for every state-action pair in the lower-level representation, and that
solved states are preserved:
\[
  r\in\mathsf G_{\calR}
  \quad\Longleftrightarrow\quad
  e(r)\in\mathsf G.
\]
Let \(\mathrm{Prov}_B(r)\) denote reachability of
\(\mathsf G_{\calR}\) from \(r\) within \(B\) lower-level verifier calls.
Then for every \(r\in\calR\),
\[
  V_B^*(e(r))=\mathbf 1[\mathrm{Prov}_B(r)=1].
\]
Consequently, if \(q_0\) is the push-forward of a problem distribution
on \(\calR\), then \(\mathrm{SP}_B^*(q_0)\) is exactly the mass of
instances that are syntactically provable within budget \(B\).
\end{proposition}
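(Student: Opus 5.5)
The argument is a two-sided comparison that uses only determinism of the verifier. By \cref{prop:det-mdp}, from any state the law of the trajectory under a policy is a mixture over action sequences of deterministic paths. Hence \(V_B^*(x)\in\{0,1\}\). Indeed, if some action sequence \(a_0,\dots,a_{s-1}\) with \(s\le B\) drives \(x\) into \(\mathsf G\), the deterministic policy that plays it succeeds with probability one. If no such sequence exists, then every realized trajectory of every policy avoids \(\mathsf G\) up to time \(B\), so \(V_B^\pi(x)=0\) for all \(\pi\). I would record this as a lemma:
\[
V_B^*(x)=\mathbf 1\bigl[\exists\, s\le B,\ a_{0:s-1}:\ F(\cdots F(x,a_0)\cdots,a_{s-1})\in\mathsf G\bigr].
\]
For the upper bound I would avoid measure-theoretic care: \(\{T_{\mathsf G}\le B\}\) is contained in the union, over finite action prefixes, of the events that the prefix is played and leads into \(\mathsf G\), and each such event has probability zero when no prefix reaches \(\mathsf G\).

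Next I would show that the encoding intertwines whole trajectories. By induction on \(s\), using \(e(F_{\calR}(r,a))=F(e(r),a)\), we get \(e(F_{\calR}^{(s)}(r,a_{0:s-1}))=F^{(s)}(e(r),a_{0:s-1})\) for every action sequence. Combined with \(r'\in\mathsf G_{\calR}\iff e(r')\in\mathsf G\), a sequence of length \(s\le B\) reaches \(\mathsf G_{\calR}\) from \(r\) exactly when it reaches \(\mathsf G\) from \(e(r)\). The lemma then gives \(V_B^*(e(r))=\mathbf 1[\mathrm{Prov}_B(r)=1]\).

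The consequence follows by change of variables. If \(q_0=e_\#p\), then \(\mathrm{SP}_B^*(q_0)=\EE_{R\sim p}[V_B^*(e(R))]=p(\{r:\mathrm{Prov}_B(r)=1\})\). This requires \(V_B^*\circ e\) to be measurable, and that holds because it equals an indicator of \(\mathrm{Prov}_B\), which I take as measurable on \(\calR\).

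I expect the main obstacle to be the second paragraph's hidden issue: reachability from \(e(r)\) in \(\calX\) uses only actions applied at states in the image of \(e\). Since the intertwining holds for every action, intermediate states stay in \(e(\calR)\), so no spurious shortcuts arise outside the image. I would state this explicitly. I would also note that ``within \(B\) verifier calls'' is identified with at most \(B\) transitions, consistent with \(T_{\mathsf G}\le B\).
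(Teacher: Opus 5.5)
Your proposal is correct and follows essentially the same route as the paper: push a successful lower-level action sequence forward through \(e\) to get \(V_B^*(e(r))=1\), and lift a successful represented sequence back using the intertwining relation and solved-set preservation. Your explicit 0--1 lemma for deterministic dynamics is a useful addition. The paper's argument only shows \(V_B^*(e(r))=1\iff\mathrm{Prov}_B(r)=1\), and leaves implicit that \(V_B^*(e(r))\) cannot lie strictly between \(0\) and \(1\). Likewise, your push-forward step proves the ``consequently'' clause, which the paper does not spell out.
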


\paragraph{Why this matters.}
\Cref{prop:faithful} says the MDP language does not replace the
underlying logic; it repackages it in a form compatible with dynamic
programming and statistical learning. This is the bridge between
classical provability and the finite-budget generation probability that
matters for agentic systems.

\paragraph{Encodings and concentration regions.}
The theorem below only needs a metric representation of the proof states
on the state-action region where training and test-time comparisons take
place. One possible implementation is a measure-valued encoding of the
current multiset of open goals, but the analysis is stated directly in
terms of the concentration sets \(\mathcal C_t\). Details about variable
numbers of goals and high-probability truncation are deferred to
\cref{app:variable-goals}.

\section{Offline Action-Value Learning}
\label{sec:learning}

We now analyze a depth-wise value-learning pipeline. For remaining
horizon \(t=1,\dots,B\), define the optimal one-step action-value
function
\[
  Q_t^*(x,a):=V_{t-1}^*(F(x,a)).
\]
Given estimators \(\widehat Q_t\), the learned prover acts greedily:
\[
  \hat\pi_t(x)\in\argmax_{a:(x,a)\in \mathcal C_t}\widehat Q_t(x,a).
\]
This deliberately isolates the central scoring problem. Retrieval,
decomposition, reranking, and tool feedback matter in our theory only
through how much they improve these action-value estimates or shorten
the remaining proof.

\paragraph{Why action-value learning rather than policy learning?}
The theorem-proving pipelines that motivate this paper are usually not
trained end to end as policies over full proof trees. In practice, many
components act more like \emph{scorers} or \emph{rerankers}: given a
state and a candidate action, they estimate whether this move is likely
to lead to a proof under the remaining budget. Our formulation matches
that reality. Training is offline regression against verifier-backed
success targets, while test-time proving is greedy decoding with respect
to the learned action values.

\begin{assumption}[Depth-wise offline value-learning model]
\label{ass:main}
Fix a budget \(B\) and an initial-state distribution \(q_0\). For each
remaining horizon \(t=1,\dots,B\), assume:
\begin{enumerate}
\item
  There is a compact concentration set
  \(\mathcal C_t\subseteq \calX\times\calA\) containing both the training
  query pairs and the state-action comparisons performed by the learned
  prover. For every relevant state \(x\) at which the regret bound is
  evaluated, the slice
  \(\calA_t(x):=\{a:(x,a)\in\mathcal C_t\}\) is nonempty, the greedy
  action \(\hat\pi_t(x)\in\argmax_{a\in\calA_t(x)}\widehat Q_t(x,a)\)
  exists, and there is an optimal Bellman action
  \(a_t^*(x)\in\calA_t(x)\) such that
  \[
    Q_t^*(x,a_t^*(x))
    =
    \sup_{a\in\calA}Q_t^*(x,a).
  \]
\item
  \(Q_t^*\) is \(L_{Q,t}\)-Lipschitz on \(\mathcal C_t\), and the hypothesis
  class satisfies
  \(\calH_t\subseteq \lip_{L_{H,t}}(\mathcal C_t,[0,1])\).
\item
  Query pairs
  \((X_{t,1},A_{t,1}),\dots,(X_{t,N_t},A_{t,N_t})\) are i.i.d.\
  samples from an exploration distribution \(q_t\) supported on
  \(\mathcal C_t\), and \(q_t\) has local mass lower bound
  \[
    q_t(B(z,r))\ge c_t r^{d_t}
  \]
  for all \(z\in\mathcal C_t\) and sufficiently small \(r>0\), where
  \(B(z,r)\) is the metric ball in \(\mathcal C_t\). Here \(N_t\) is the
  number of sampled state-action query pairs, \(d_t\) is the effective
  local dimension, and \(c_t>0\) is the corresponding coverage constant.
\item
  Each query pair receives \(m_t\) conditionally independent binary
  rollout labels \(Y_{t,i,1},\dots,Y_{t,i,m_t}\in\{0,1\}\) such that
  \[
    \EE[Y_{t,i,j}\mid X_{t,i},A_{t,i}]
    =
    Q_t^*(X_{t,i},A_{t,i}).
  \]
  Thus \(m_t\) is the number of independent verifier-backed rollout
  labels collected for each sampled query pair.
\item
  Writing
  \(\bar Y_{t,i}:=m_t^{-1}\sum_{j=1}^{m_t}Y_{t,i,j}\), the estimator is
  chosen by minimax regression on the sampled points:
  \[
    \widehat Q_t
    \in
    \argmin_{h\in\calH_t}
    \max_{1\le i\le N_t}
    |h(X_{t,i},A_{t,i})-\bar Y_{t,i}|.
  \]
\end{enumerate}
\end{assumption}

\paragraph{Relevant domains and stability.}
The set \(\mathcal C_t\) should be read as the non-negligible
state-action region occupied by the data generator and by the learned
prover at remaining horizon \(t\). Empirically, it can be approximated
by the support, or a high-mass enlargement, of the available prover
traces. The assumption is not automatically stable under arbitrary model
changes: if a new prover leaves this occupied region, the bound no
longer certifies its behavior without additional coverage assumptions.
This is why the theorem is a trace-local diagnostic rather than a global
claim about the entire proof-search space.

\paragraph{Optimizer and retention conditions.}
The optimizer clauses in \Cref{ass:main} have different roles. The
existence of the greedy action is a mild well-posedness condition: once
\(\calA_t(x)\) is nonempty, compactness of the slice and Lipschitz
continuity of \(\widehat Q_t\) give a maximizer by Weierstrass' theorem.
The optimal Bellman-action clause is stronger. It says not only that an
optimal first action exists, but also that at least one such action lies
inside the candidate region \(\mathcal C_t\) on the states where the
learned prover is evaluated. This is a candidate-retention assumption:
the data generator and retrieval mechanism must not exclude every
optimal move from the local comparison set. Standard compact/Feller
conditions that guarantee Bellman maximizers are recalled in
\cref{app:wellposed}; the main theorem only needs their trace-local
consequence stated in \Cref{ass:main}. If one has only an
\(\eta_t\)-optimal retained action, the same proof gives the bound with
an additional occupancy-weighted \(\eta_t\) error term.

\paragraph{Regularity and optimization idealizations.}
The Lipschitz assumptions on \(Q_t^*\) and \(\calH_t\) are representation
assumptions: they assert that, on the occupied state-action region, the
chosen encoding makes nearby proof states have comparable future success
probabilities and makes the learned score class no rougher than
\(L_{H,t}\). The exact minimax regression step is also an idealization.
Allowing a numerical or statistical optimizer whose empirical minimax
loss is within \(\xi_t\) of the infimum simply adds a corresponding
\(\xi_t\) term to the value-estimation error \(\eps_t\).

\paragraph{Rollout targets.}
The binary labels \(Y_{t,i,j}\) are idealized verifier-backed success
labels: after proposing action \(A_{t,i}\) at state \(X_{t,i}\), one
continues for the remaining budget and records whether a verified proof
is reached. The assumption
\(\EE[Y_{t,i,j}\mid X_{t,i},A_{t,i}]=Q_t^*(X_{t,i},A_{t,i})\) is an
oracle version of this construction. In practice, repeated rollouts,
teacher-guided rollouts, or bootstrapped value targets only approximate
this oracle target; the resulting bias is absorbed into the
approximation term in \(\eps_t\).

\begin{theorem}[Offline action-value regression implies statistical provability]
\label{thm:main}
Under \Cref{ass:main}, for any \(\delta_1,\dots,\delta_B\in(0,1)\),
with probability at least \(1-\sum_{t=1}^B\delta_t\) over the sampled
query pairs and rollout labels,
\begin{equation}
\label{eq:main-occ}
  \mathrm{SP}_B^{\hat\pi}(q_0)
  \ge
  \mathrm{SP}_B^*(q_0)
  -
  2\sum_{s=1}^B
  \PP_{q_0}^{\hat\pi}(T_{\mathsf G}\ge s)\,\eps_{B-s+1},
\end{equation}
where the probabilities
\(\PP_{q_0}^{\hat\pi}(T_{\mathsf G}\ge s)\) form the learned prover's
unsolved-mass curve, and where
\begin{align}
\label{eq:main-eps}
  \eps_t
  &:=
  \underbrace{
	    \inf_{h\in\calH_t}\|h-Q_t^*\|_{L_\infty(\mathcal C_t)}
	  }_{\text{approximation}} \notag\\
	  & \quad +
	  \underbrace{
	    C_{\mathrm{cov}}(L_{H,t}+L_{Q,t})c_t^{-1/d_t}
	    \left(\frac{\log(4N_t/\delta_t)}{N_t}\right)^{1/d_t}
	  }_{\text{coverage / geometry}} \notag \\
	  & \quad +
	  \underbrace{
	    2\sqrt{\frac{\log(4N_t/\delta_t)}{2m_t}}
	  }_{\text{rollout noise}}.
\end{align}
Here \(C_{\mathrm{cov}}>0\) is a universal covering constant. The three
terms in \(\eps_t\) are, respectively, approximation error, finite-sample
coverage error over \(\mathcal C_t\), and rollout-label noise.
If in addition the conditional occupancy distribution
\[
  O_s^{\hat\pi}
  :=
  \mathcal L(X_{s-1}\mid T_{\mathsf G}\ge s)
\]
satisfies the margin condition, for every \(s=1,\dots,B\) and
every \(u\ge 0\),
\[
  O_s^{\hat\pi}(\Delta_t\le u)\le C_\Delta u^\beta,
  \qquad t=B-s+1,
\]
for some constants \(C_\Delta>0\) and \(\beta\ge 0\), where
\(\Delta_t(x)\) is the gap between the best and second-best candidate
action values at remaining horizon \(t\), then on the same event,
\begin{equation}
\label{eq:main-margin}
  \begin{aligned}
  \mathrm{SP}_B^{\hat\pi}(q_0)
  &\ge
  \mathrm{SP}_B^*(q_0)
  -
  2C_\Delta\sum_{s=1}^B
  \PP_{q_0}^{\hat\pi}(T_{\mathsf G}\ge s)\\
  &\qquad\cdot
  (2\eps_{B-s+1})^{\beta+1}.
  \end{aligned}
\end{equation}
\end{theorem}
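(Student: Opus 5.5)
The proof has two independent parts: a deterministic performance-difference bound that turns uniform action-value errors into a provability gap, and a statistical bound showing \(\|\widehat Q_t-Q_t^*\|_{L_\infty(\mathcal C_t)}\le\eps_t\) with probability at least \(1-\delta_t\). A union bound over \(t=1,\dots,B\) then gives the stated event.

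\textbf{Step 1 (telescoping over depth).} Let \(\hat\pi\) be the nonstationary policy that plays \(\hat\pi_{B-s+1}\) at step \(s\), and write \(t=B-s+1\) for the remaining horizon. Because the dynamics are deterministic (\Cref{prop:det-mdp}) and \(V_t^*(x)=\sup_a Q_t^*(x,a)\) off \(\mathsf G\), I will use the standard performance-difference identity for finite-horizon reachability:
\[
V_B^*(x)-V_B^{\hat\pi}(x)=\sum_{s=1}^B \EE_x^{\hat\pi}\bigl[\mathbf 1[T_{\mathsf G}\ge s]\,\bigl(V_t^*(X_{s-1})-Q_t^*(X_{s-1},\hat\pi_t(X_{s-1}))\bigr)\bigr].
\]
Here the event \(T_{\mathsf G}\ge s\) means \(X_0,\dots,X_{s-1}\notin\mathsf G\), and solved states are absorbing with value \(1\). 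The identity is proved by induction on \(B\): add and subtract \(Q_t^*(x,\hat\pi_t(x))\), then use \(Q_t^*(x,a)=V_{t-1}^*(F(x,a))\). The bracketed term is the per-step suboptimality gap. Retention in \Cref{ass:main}(1) gives \(V_t^*(x)=Q_t^*(x,a_t^*(x))\) with \(a_t^*\in\calA_t(x)\). Greediness gives \(\widehat Q_t(x,\hat\pi_t)\ge\widehat Q_t(x,a_t^*)\). Together these yield
\[
V_t^*-Q_t^*(x,\hat\pi_t)\le 2\|\widehat Q_t-Q_t^*\|_{L_\infty(\mathcal C_t)}\le 2\eps_t.
\]
Integrating over \(q_0\) then gives \eqref{eq:main-occ}.

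\textbf{Step 2 (the uniform error bound, the main obstacle).} Fix \(t\) and let \(h^\star\) be a near-best approximant in \(\calH_t\), with approximation error \(\eps_{\mathrm{app}}\). The argument has three pieces.
\begin{itemize}
\item \emph{Label noise.} Hoeffding's inequality with a union bound over the \(N_t\) sampled points gives \(\max_i|\bar Y_{t,i}-Q_t^*(X_{t,i},A_{t,i})|\le\sqrt{\log(4N_t/\delta_t)/(2m_t)}\) with probability at least \(1-\delta_t/2\).
\item \emph{Empirical fit.} Minimax optimality of \(\widehat Q_t\), compared against \(h^\star\), gives \(\max_i|\widehat Q_t-\bar Y|\le\eps_{\mathrm{app}}+\text{noise}\). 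Hence \(\max_i|\widehat Q_t-Q_t^*|\le\eps_{\mathrm{app}}+2\,\text{noise}\) at the sample points.
\item \emph{Extension to all of \(\mathcal C_t\).} The function \(\widehat Q_t-Q_t^*\) is \((L_{H,t}+L_{Q,t})\)-Lipschitz, so it suffices that the samples form an \(r\)-net of \(\mathcal C_t\). Cover \(\mathcal C_t\) by balls of radius \(r/2\). The local mass bound limits the cover size to about \(c_t^{-1}r^{-d_t}\) and gives each ball mass at least \(c_t(r/2)^{d_t}\). The probability that some ball is missed is at most \(\#\text{balls}\cdot(1-c_t(r/2)^{d_t})^{N_t}\). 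Choosing \(r\asymp (c_t^{-1}\log(4N_t/\delta_t)/N_t)^{1/d_t}\) makes this at most \(\delta_t/2\).
\end{itemize}
The delicate part is bounding the covering number by the mass lower bound and absorbing \(\log(1/r)\) into \(\log N_t\). This is where \(C_{\mathrm{cov}}\) and the requirement that \(r\) be small enough enter.

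\textbf{Step 3 (margin refinement).} Write \(g_t(x):=V_t^*(x)-Q_t^*(x,\hat\pi_t(x))\). Whenever \(\hat\pi_t(x)\) is not optimal, the action gap forces \(g_t(x)\ge\Delta_t(x)\), while Step 1 gives \(g_t(x)\le 2\eps_t\). So \(g_t\) can be nonzero only where \(\Delta_t(x)\le 2\eps_t\), and there it is at most \(2\eps_t\). This gives \(g_t(x)\le 2\eps_t\mathbf 1[\Delta_t(x)\le 2\eps_t]\). The expectation of \(\mathbf 1[T_{\mathsf G}\ge s]\,g_t(X_{s-1})\) equals \(\PP(T_{\mathsf G}\ge s)\,\EE_{O_s^{\hat\pi}}[g_t]\). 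The margin condition bounds \(\EE_{O_s^{\hat\pi}}[g_t]\) by \(2\eps_t\,C_\Delta(2\eps_t)^\beta\). Substituting into the identity of Step 1 gives \eqref{eq:main-margin}.
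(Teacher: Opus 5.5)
Your proposal is correct and follows the paper's proof essentially step for step. The paper telescopes $D_t=V_t^*-V_t^{\hat\pi}$ along the learned trajectory, which is your performance-difference identity written as an inequality after the one-step greedy lemma; it gets the uniform error bound from a maximal-packing net argument, Hoeffding with a union bound, the minimax comparison, and Lipschitz extension; and it handles the margin case with the same reduction $r_s(x)\le 2\eps_t\mathbf 1[\Delta_t(x)\le 2\eps_t]$. Your margin computation gives the constant $C_\Delta$ rather than $2C_\Delta$, which is slightly sharper and implies the stated bound.
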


\paragraph{Average-length reading.}
The first bound is immediately interpretable because the probabilities
\(\PP_{q_0}^{\hat\pi}(T_{\mathsf G}\ge s)\) sum to the learned prover's
average truncated proof length in \cref{eq:avg-proof-length}. In
particular, if
\(\sup_t\eps_t\le \bar\eps_B\), then
\[
  \mathrm{SP}_B^*(q_0)-\mathrm{SP}_B^{\hat\pi}(q_0)
  \le
  2\bar\eps_B\,\mathrm{Len}_B^{\hat\pi}(q_0).
\]
Thus the theorem does not only say that smaller value-estimation error is
better; it says where that error is accumulated. A prover that resolves
typical instances quickly has a smaller multiplier, even if the local
error level \(\bar\eps_B\) is unchanged.

\paragraph{Proof ingredients.}
The argument has three layers. First, a deterministic Bellman recursion
expresses the optimal finite-budget success probability in terms of the
one-step action-value functions \(Q_t^*\). Second, a uniform bound on
\(\widehat Q_t-Q_t^*\) yields a one-step greedy regret inequality, and
telescoping this inequality along the learned trajectory produces the
occupancy-sensitive bound in \cref{eq:main-occ}. Third, the regression
analysis shows that \(\widehat Q_t\) is uniformly close to \(Q_t^*\) on the
relevant compact concentration set by combining sample coverage of the
query pairs with concentration of the averaged rollout labels. Under a
margin condition, only near-tie states can incur harmful misranking,
which upgrades the linear dependence on \(\eps_t\) to the fast-rate form
in \cref{eq:main-margin}. \cref{app:proofs} contains the full
statements and proofs.

\paragraph{Budget allocation.}
Let \(n_t:=N_t m_t\) be the total verifier-label budget at depth \(t\).
Write
\[
  \eps_{\mathrm{app},t}
  :=
  \inf_{h\in\calH_t}\|h-Q_t^*\|_{L_\infty(\mathcal C_t)}
\]
for the approximation term in \cref{eq:main-eps}.
Balancing the geometry term and the rollout-noise term gives
\[
  N_t\asymp n_t^{d_t/(d_t+2)},
  \qquad
  m_t\asymp n_t^{2/(d_t+2)},
\]
which yields
\[
  \eps_t
  =
  \eps_{\mathrm{app},t}
  +
  \widetilde O\!\left(
    (L_{H,t}+L_{Q,t})c_t^{-1/d_t}n_t^{-1/(d_t+2)}
  \right).
\]
Thus the statistical rate depends on an \emph{effective geometry}
through \((d_t,c_t)\), not on the raw syntactic size of the logic.

\paragraph{A shortest-proof interpretation.}
Let \(L^*(x)\) denote the shortest horizon \(t\) such that
\(V_t^*(x)=1\), with \(L^*(x)=\infty\) if no budget-feasible proof
exists. Then \(\mathrm{SP}_B^*(q_0)\) is precisely the \(q_0\)-mass of
instances with \(L^*(x)\le B\). In other words, the target being learned
is not arbitrary reward shaping; it is the probability mass of theorem
instances whose shortest verified proofs fit inside the compute budget.
This interpretation is useful when comparing proof-shortening
mechanisms, because decomposition or lemma introduction can improve
statistical provability either by increasing action-value regularity or
simply by reducing the relevant shortest proof lengths.

\begin{corollary}[Interpretation on provable instances]
\label{cor:conditional}
If \(\sup_t\eps_t\le \bar\eps_B\), then on the event of
\Cref{thm:main},
\[
  \mathrm{SP}_B^{\hat\pi}(q_0)
  \ge
  \mathrm{SP}_B^*(q_0)
  -
  2\bar\eps_B\,\mathrm{Len}_B^{\hat\pi}(q_0).
\]
If moreover \(q_0(\mathrm{Prov}_B=1)>0\), then by
\Cref{prop:faithful},
\[
  \EE_{q_0}\!\left[
    V_B^{\hat\pi}(X)\mid \mathrm{Prov}_B(X)=1
  \right]
  \ge
  1-
  \frac{
    2\bar\eps_B\,\mathrm{Len}_B^{\hat\pi}(q_0)
  }{
    q_0(\mathrm{Prov}_B=1)
  }.
\]
\end{corollary}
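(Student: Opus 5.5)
The first inequality is immediate from the first bound of \Cref{thm:main}. On the event of that theorem, each term obeys $\PP_{q_0}^{\hat\pi}(T_{\mathsf G}\ge s)\,\eps_{B-s+1}\le \bar\eps_B\,\PP_{q_0}^{\hat\pi}(T_{\mathsf G}\ge s)$, because the probabilities are nonnegative and $\eps_{B-s+1}\le\sup_t\eps_t\le\bar\eps_B$. Summing over $s=1,\dots,B$ and applying the tail-sum identity \cref{eq:avg-proof-length} turns the sum of unsolved-mass probabilities into $\mathrm{Len}_B^{\hat\pi}(q_0)$. This gives $\mathrm{SP}_B^{\hat\pi}(q_0)\ge\mathrm{SP}_B^*(q_0)-2\bar\eps_B\,\mathrm{Len}_B^{\hat\pi}(q_0)$. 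Nothing probabilistic is required beyond restricting to the same event.

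For the conditional statement, I will interpret $\mathrm{Prov}_B(x)$ through \Cref{prop:faithful}. On instances in the image of the faithful encoding, that proposition gives $V_B^*(x)=\mathbf 1[\mathrm{Prov}_B(x)=1]$. Hence $\mathrm{SP}_B^*(q_0)=\EE_{q_0}[V_B^*(X)]=q_0(\mathrm{Prov}_B=1)=:p$, where I read $q_0$ as the push-forward of the lower-level problem distribution, as in the consequence clause of \Cref{prop:faithful}.

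Next I split the learned prover's success probability according to provability. If $\mathrm{Prov}_B(x)=0$, then $V_B^*(x)=0$, and since $0\le V_B^{\hat\pi}\le V_B^*$ pointwise by the definition of the supremum, also $V_B^{\hat\pi}(x)=0$. Therefore
\[
\mathrm{SP}_B^{\hat\pi}(q_0)=\EE_{q_0}\bigl[V_B^{\hat\pi}(X)\mathbf 1[\mathrm{Prov}_B(X)=1]\bigr]=p\,\EE_{q_0}\bigl[V_B^{\hat\pi}(X)\mid\mathrm{Prov}_B(X)=1\bigr],
\]
which is well defined because $p>0$. Substituting this identity and $\mathrm{SP}_B^*(q_0)=p$ into the first inequality and dividing by $p$ gives the claimed bound $1-2\bar\eps_B\mathrm{Len}_B^{\hat\pi}(q_0)/p$.

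The only step needing care is identifying $\mathrm{SP}_B^*(q_0)$ with $q_0(\mathrm{Prov}_B=1)$. This requires $q_0$ to be supported on encoded states, so that \Cref{prop:faithful} applies $q_0$-almost surely; I will state that standing reading explicitly. Everything else is linearity of expectation.
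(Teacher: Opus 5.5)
Your proposal is correct and follows the paper's proof essentially step for step. You bound each \(\eps_{B-s+1}\) by \(\bar\eps_B\) and apply the tail-sum identity, use \Cref{prop:faithful} to get \(\mathrm{SP}_B^*(q_0)=q_0(\mathrm{Prov}_B=1)\), factor \(\mathrm{SP}_B^{\hat\pi}(q_0)\) over the provable set, and divide by the positive mass; your pointwise bound \(V_B^{\hat\pi}\le V_B^*\) supplies the justification the paper leaves implicit for \(V_B^{\hat\pi}=0\) on unprovable instances, and your push-forward reading of \(q_0\) matches the hypothesis of the consequence clause in \Cref{prop:faithful}.
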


The corollary makes the comparison especially concrete: the theorem is
not only about abstract value gaps. It lower bounds the learned prover's
average success probability on the subset of instances that are
syntactically provable within the budget.

\section{What the Bound Says About Agentic Provers}
\label{sec:implications}

\paragraph{Average proof length matters through occupancy.}
If \(\sup_t \eps_t\le \bar\eps_B\), then \cref{eq:main-occ} implies
\[
  \begin{aligned}
  \mathrm{SP}_B^*(q_0)-\mathrm{SP}_B^{\hat\pi}(q_0)
  &\le
  2\bar\eps_B \sum_{s=1}^B
  \PP_{q_0}^{\hat\pi}(T_{\mathsf G}\ge s)\\
  &=
  2\bar\eps_B\,\mathrm{Len}_B^{\hat\pi}(q_0).
  \end{aligned}
\]
The regret therefore scales with
\(\mathrm{Len}_B^{\hat\pi}(q_0)\), the learned prover's average truncated
proof length. Any mechanism that shortens proofs or resolves them
earlier, such as useful decomposition or better subgoal selection,
directly reduces the end-to-end loss multiplier.

It is worth stressing that the theorem depends on the
\emph{learned prover's} unsolved-mass curve, not directly on the
distribution of optimal shortest proof lengths. This is unavoidable. A
single early mistake can move the learned prover into a dead end or into
a region where the remaining shortest proof is much longer than under an
optimal policy. Without extra assumptions, the quantity that propagates
through the Bellman argument is therefore the occupancy actually induced
by the learned policy itself.

\paragraph{Retrieval and representation help by reshaping the learning problem.}
The geometry term in \cref{eq:main-eps} separates three effects:
\(L_{Q,t}\) captures how irregular the true action-value is under the
chosen representation, \(L_{H,t}\) captures how restrictive the score
class is, and \(c_t^{-1/d_t}\) captures how well the exploration
distribution covers the region that matters. Retrieval, premise
selection, and representation learning help when they make \(Q_t^*\)
smoother, concentrate the relevant state-action pairs on a lower
dimensional region, or increase action margins by pruning obviously bad
choices. This provides a concrete statistical interpretation of pipeline
engineering choices that are often justified only empirically.

This also suggests a concrete representation-learning objective. A good
state representation should not merely make theorem states easy to
cluster; it should make the relevant action-value functions smoother and
the training distribution thicker on the states that matter at test
time. In that sense, the theorem gives a certificate-oriented criterion
for deciding whether a representation is actually useful for proving.

\paragraph{Bellman certificates.}
The same Bellman inequalities can also be used to build upper and lower
certificates for the optimal success probability \(V_B^*\). This is not
needed for the main theorem, whose role is to compare a learned prover
with the optimal finite-budget prover, so we defer the certificate view
and its representation-learning interpretation to \cref{app:representation}.

\paragraph{Verifier feedback matters twice.}
First, it supplies the rollout labels used for action-value regression.
Second, at test time it prevents the prover from drifting silently into
invalid branches. In our framework, better verifier feedback appears as
smaller approximation error, better coverage of the relevant region, and
larger action gaps. This explains why a strong verifier can improve
scaling even when the base policy itself is unchanged.

\paragraph{Variable numbers of goals do not break the theory.}
Real proof states may generate many subgoals. The main theorem does not
require global compactness of the entire state space; it only needs
compact concentration sets \(\mathcal C_t\) at each remaining horizon. A
standard truncation argument adds an overflow penalty \(\delta_W\) when
one wants to reason on a globally compact state space. We state this
extension in \cref{app:variable-goals}.

\paragraph{Why we focus on greedy proving.}
Beam search, top-\(k\) filtering, rollouts, and backtracking are
important in practice. We focus on greedy execution because it exposes
the clearest connection between value estimation and success
probability. The same Bellman argument extends to richer planners once
one adds candidate-retention or planner-suboptimality assumptions; we
briefly discuss this route in \cref{app:scaling}.

\section{Easy and Hard Instance Regimes}
\label{sec:easy-hard}

The theorem organizes a common empirical observation: some theorem
families become easy for agentic provers very quickly, while others
remain stubbornly difficult even when the base reasoning model looks
strong.

\paragraph{Easier instances.}
The favorable regime is characterized by small
\(\mathrm{Len}_B^{\hat\pi}(q_0)\), large action gaps, low-dimensional
concentration of reachable proof states, and good coverage by the
exploration distribution. In that regime,
\(\eps_t\) falls quickly with the verifier-label budget and the
occupancy weights decay rapidly because many trajectories solve early.
This is the setting where retrieval and decomposition are most useful:
they can expose the right lemma early enough that the rest of the proof
becomes almost deterministic.

\paragraph{Harder instances.}
The difficult regime is characterized by long proof horizons, many
near-tie actions, poor training coverage, or proof states whose useful
continuations depend on brittle symbolic details. Then both parts of the
bound become unfavorable: the value-learning problem is statistically
harder, and the occupancy weights stay large because errors made early
can send the prover into dead ends. This is exactly where worst-case
hardness and practical failure modes align.

Using \(\bar\eps_B\) for a uniform bound on the depth-wise errors, the
main theorem can be read schematically as
\[
  \mathrm{SP}_B^*(q_0)-\mathrm{SP}_B^{\hat\pi}(q_0)
  \lesssim
  \mathrm{Len}_B^{\hat\pi}(q_0)\,\bar\eps_B
\]
without a margin condition, and as
\[
  \mathrm{SP}_B^*(q_0)-\mathrm{SP}_B^{\hat\pi}(q_0)
  \lesssim
  \mathrm{Len}_B^{\hat\pi}(q_0)\,\bar\eps_B^{\beta+1}
\]
under a margin condition. This captures the main engineering message
using the same notation as the theorem: to improve proving success, one
can reduce average proof length, reduce value-estimation error, or make
correct actions more separated.

\paragraph{Scaling-law viewpoint.}
The theorem also yields immediate sample-complexity heuristics. Ignoring
logarithmic factors and approximation error, the uniform regression
bound behaves like
\[
  \eps_t \approx n_t^{-1/(d_t+2)}.
\]
Here \(n_t=N_t m_t\) is the verifier-label budget at remaining horizon
\(t\), and \(d_t\) is the effective local dimension from
\Cref{ass:main}. If these quantities are roughly constant across
horizons, say \(n_t\approx n\) and \(d_t\approx d\), then
\cref{eq:main-occ} gives
\[
  \mathrm{SP}_B^*(q_0)-\mathrm{SP}_B^{\hat\pi}(q_0)
  \lesssim
  \mathrm{Len}_B^{\hat\pi}(q_0)\,
  n^{-1/(d+2)}.
\]
Under a margin condition with exponent \(\beta\), the dependence
improves schematically to
\[
  \mathrm{SP}_B^*(q_0)-\mathrm{SP}_B^{\hat\pi}(q_0)
  \lesssim
  \mathrm{Len}_B^{\hat\pi}(q_0)\,
  n^{-(\beta+1)/(d+2)}.
\]
These formulas clarify what kinds of apparent ``test-time scaling''
gains are actually plausible. Large practical improvements do not
require changing worst-case complexity classes; they can arise whenever
retrieval, decomposition, or representation learning reduce the
effective horizon, lower the effective geometric dimension, or increase
the action-gap margin.

This is also why the choice of \(q_0\) matters. The same logical system
may look intractable when \(q_0\) is uniform or adversarial over hard
instances, and much easier when \(q_0\) concentrates on recurring
definitions, library idioms, and proof patterns. Our framework does not
deny the former; it tries to explain the latter.

\section{Related Work}
\label{sec:related}

\paragraph{From symbolic automation to learning-guided search.}
Learning-guided theorem proving predates modern LLMs. TacticToe
\citep{Gauthier2017tactictoe,Gauthier2021} learned tactic guidance and
combined it with Monte Carlo tree search in HOL4. HOList and related
systems \citep{Bansal2019holist} established large-scale environments
for higher-order theorem proving. LeanDojo \citep{yang2023leandojo}
lowered the barrier to training and evaluating Lean-based agents, and
miniF2F \citep{zheng2022miniff} helped standardize formal mathematics
benchmarks. These works already support the sequential-decision view:
proof search unfolds through adaptive interaction with a verifier under
tight computational budgets.
The MDP perspective is also explicit in Bourbaki
\citep{Zimmer2025}, which studies self-generated, goal-conditioned MDPs
and MCTS-like search for theorem proving. Our use of MDPs has a
different purpose: the MDP is the mathematical interface through which
we analyze finite-budget provability, Bellman certificates, and
statistical score-estimation errors, rather than an implemented search
framework.

\paragraph{Neural language models and formal proving.}
GPT-f \citep{Polu2020gen} demonstrated early on that generative language
models can discover useful formal proofs. The main methodological shift
was from hand-engineered proof features to learned representations of
states, tactics, and libraries. This made it natural to combine
proposal, verification, and repair in multi-step loops rather than
treating theorem proving as one-shot next-step classification.

\paragraph{Agentic provers and decomposition.}
Recent systems increasingly operate as explicit agentic workflows with
proposal, retrieval, decomposition, verification, and refinement. DSP
\citep{jiang2023draft} uses informal proof sketches to guide formal
search. DeepSeek-Prover and DeepSeek-Prover-V2
\citep{xin2024deepseekprover,Ren2025deepseekprover-v2} emphasize data
generation and recursive subgoal decomposition in Lean~4. Prover Agent
\citep{baba2025proveragent}, Seed-Prover
\citep{bytedance2025seedprover}, Hilbert
\citep{varambally2025hilbert}, and Aristotle
\citep{harmonic2025aristotle} all make intermediate lemmas or subgoals
first-class objects. This decomposition trend also echoes classical
proof complexity: useful cuts can dramatically shorten proofs, while
their elimination can cause severe blow-up
\citep{boolos1984dont-eliminate-cut}. Our theory gives a statistical
account of the same phenomenon through effective horizon and action-gap
effects.
This is complementary to \citet{Sonoda2026expatp}, which studies a more
specialized flat versus hierarchical comparison and proves an
exponential sample-complexity separation under cut-aware structure. The
present paper instead develops a general value-based theory of
time-bounded statistical provability and score-guided proving.

Seen from this angle, the current wave of agentic theorem proving is not
just ``LLMs applied to theorem proving.'' It is a convergence of three
older ideas: proof search as sequential decision making, proof
complexity as sensitivity to intermediate lemmas, and inference-time
computation as a resource that can be adaptively allocated.

\paragraph{Test-time scaling, verifiers, and interactive learning.}
Our viewpoint is also related to work on test-time scaling and
verifier-guided inference. Chain-of-thought prompting and theoretical
analyses of reasoning traces show that inference-time computation can
change what a model effectively computes
\citep{kojima2022cot,feng2023cot-theory,phan2023cot}. Recent studies of
adaptive test-time strategies, best-of-\(N\), and verifier-guided
selection show that verifier quality can alter scaling laws
\citep{wu2024scaling,beirami2025bon,setlur2025tts-verifier,botta2025bon}.
Worst-case versus average-case guarantees for LLMs have also been
studied through verifier-based notions such as verifiability
\citep{Amit2026}. That work analyzes learning procedures for
self-proving or verifier-backed models. Our focus is narrower and more
theorem-proving-specific: given a finite-horizon proof-search process,
we expose how action-value error, occupied-region geometry, margins, and
average proof length enter the success probability.
More abstractly, interactive learning theory has long treated queries
and feedback as statistical resources, as in Angluin's \(L^\ast\)
algorithm \citep{angluin1987Lstar}, and recent work on computationally
bounded information and formal-language expressivity provides additional
vocabulary for reasoning systems
\citep{finzi2026epiplexity,strobl2024formal-survey}. What is missing
from that literature is a theorem-proving-specific bridge from
verifier-backed score estimation to end-to-end proof success on
structured instance distributions. That is the gap addressed here.
\cref{app:literature} gives a longer overview.

This interactive-learning connection is more than an analogy. In formal
theorem proving, the verifier is an information channel: each tactic
attempt returns structured feedback about which branch remains open,
which constraints are unsatisfied, and whether an intermediate step was
valid. Learning to prove is therefore not only a problem of fitting a
static predictor from past data. It is also a problem of learning how to
extract useful information from verifier interaction under a limited
budget of proof attempts.

\paragraph{Positioning.}
Our contribution is therefore not a survey of theorem proving systems
per se, nor a generic theory of test-time scaling. It is a bridge
between them: a value-based framework in which proof-theoretic
structure, verifier interaction, representation geometry, and learning
guarantees on the occupied trace region all appear in the same
performance bound.

\section{Conclusion}

We proposed a statistical provability theory for agentic theorem
proving centered on a simple pipeline: offline action-value regression
followed by greedy proving. The theory keeps the logical notion of
provability intact through faithful abstraction, but recasts the
evaluation problem in terms of finite-budget success probability on the
problem stream \(q_0\). The resulting bound is explicit about what helps:
shorter effective proofs, smoother value functions, better covered
representations, lower-variance verifier labels, and larger action
margins. A useful feature of the result is that one complexity multiplier
is directly interpretable from traces: the learned prover's average
truncated proof length. This does not remove worst-case hardness. It
explains why agentic theorem provers can still work well on the
structured theorem distributions that arise in practice.

\paragraph{Limitations.}
The model is intentionally simplified. We analyze greedy proving after
offline value regression, while practical systems use beam search,
backtracking, retrieval refreshes, proof repair, and multiple interacting
agents. The rollout-label assumption is also idealized, and the compact
concentration, coverage, Lipschitz, and margin conditions are modeling
assumptions about the occupied trace region rather than guarantees about
the full proof-search space.

\paragraph{Future work.}
A natural next step is to estimate the quantities in the bound from real
prover traces: occupied-region coverage, effective dimension, label
variance, action-gap tails, and the unsolved-mass curve. Another
direction is to extend the same Bellman-certificate analysis to richer
planners such as beam search, top-\(k\) search with backtracking, and
decomposition policies that explicitly synthesize intermediate lemmas.

\section*{Acknowledgements}
SS was supported by 
JST BOOST JPMJBY24E2, JST CREST JPMJCR25I5,
and JSPS KAKENHI 24K21316.
YU was supported by 
JST CREST JPMJCR21M3.

\section*{Impact Statement}

This paper presents theoretical work whose goal is to advance the field of Machine Learning by developing a statistical framework for analyzing agentic theorem-proving pipelines. The results are not directly deployable as a system and do not, by themselves, cause immediate harm. However, because our analysis characterizes which pipeline components and design choices most strongly affect success probability, it could be repurposed to improve the effectiveness of agentic models in other domains. In particular, a malicious actor could apply the same principles to engineer more capable harmful agents (e.g., by optimizing search, decomposition, or tool use). We therefore emphasize that our intent is to support transparency and safety-oriented understanding of agentic systems.

\bibliography{libraryS}
\bibliographystyle{icml2026}

\newpage
\appendix
\onecolumn
\crefalias{section}{appendix} %
\section{Additional Technical Details}
\label{app:proofs}

This appendix gives full proofs for
\cref{prop:det-mdp,prop:faithful,thm:main,cor:conditional}, records
sufficient conditions for the optimizer clauses in \Cref{ass:main}, and
supports \cref{sec:setup,sec:learning}.

\subsection{Proof of the deterministic-MDP proposition}

\begin{proof}[Proof of \Cref{prop:det-mdp}]
Define the MDP state space to be \(\calX\), the action
space to be \(\calA\), the transition kernel to be
\[
  P(\cdot\mid x,a)=\delta_{F(x,a)},
\]
and the goal set to be \(\mathsf G\). Because the next state
depends only on the current state and chosen action, the Markov property
holds. Because the kernel is a Dirac mass, the transition is
deterministic. Reaching \(\mathsf G\) within \(B\) steps is
exactly the same event as completing the proof within \(B\) verifier
calls, so this yields a deterministic finite-horizon reachability MDP.
\end{proof}

\subsection{Proof of the faithful-abstraction proposition}

\begin{proof}[Proof of \Cref{prop:faithful}]
If \(\mathrm{Prov}_B(r)=1\), then by definition there exists a
valid proof-search trace or certificate that reaches a closed state
within at most \(B\) verifier calls. Following the corresponding action
sequence in the represented system reaches \(\mathsf G\) within the same
budget because the transition and solved-set abstractions are faithful.
Hence
\(V_B^*(e(r))=1\).

Conversely, if \(V_B^*(e(r))=1\), then some action sequence
reaches \(\mathsf G\) in the represented system within \(B\) steps.
Faithful abstraction lifts that sequence back to the lower-level verifier
dynamics and solved set,
yielding a valid proof-search trace and hence a proof certificate within
budget \(B\). Therefore
\(\mathrm{Prov}_B(r)=1\).
\end{proof}

\subsection{Well-posedness of Bellman and greedy maximizers}
\label{app:wellposed}

\begin{proposition}[Sufficient conditions for maximizers]
\label{prop:app-wellposed}
Suppose \(\calX\) and \(\calA\) are compact metric spaces,
\(\mathsf G\subseteq\calX\) is closed, and the deterministic verifier
transition \(F:\calX\times\calA\to\calX\) is continuous. Then for every
finite horizon \(t\), the Bellman value \(V_t^*\) is upper
semicontinuous and, for every state \(x\), the supremum
\[
  \sup_{a\in\calA} Q_t^*(x,a)
\]
is attained by at least one action. Moreover, if
\(\mathcal C_t\subseteq\calX\times\calA\) is compact, the slice
\(\calA_t(x):=\{a:(x,a)\in\mathcal C_t\}\) is nonempty, and
\(\widehat Q_t\) is continuous on \(\mathcal C_t\), then the greedy
maximizer
\[
  \hat\pi_t(x)\in\argmax_{a\in\calA_t(x)}\widehat Q_t(x,a)
\]
exists. Consequently, the optimizer clauses in \Cref{ass:main} hold
whenever the compact concentration set additionally retains at least one
global Bellman maximizer at each relevant state.
\end{proposition}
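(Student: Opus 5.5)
We prove the three claims of \Cref{prop:app-wellposed} in order: upper semicontinuity of \(V_t^*\) by induction on the horizon, attainment of the Bellman supremum, and existence of the greedy maximizer. The induction rests on the finite-horizon Bellman recursion for the deterministic reachability MDP of \Cref{prop:det-mdp}:
\[
  V_0^*(x)=\mathbf 1_{\mathsf G}(x),\qquad
  V_t^*(x)=\max\Bigl\{\mathbf 1_{\mathsf G}(x),\ \sup_{a\in\calA}V_{t-1}^*(F(x,a))\Bigr\}.
\]
This is the standard dynamic-programming identity for \(\PP_x^\pi(T_{\mathsf G}\le t)\). Because transitions are Dirac, randomized policies cannot beat deterministic ones, so \(V_t^*\in\{0,1\}\) and equals \(\mathbf 1\) of the set \(R_t\) of states that can reach \(\mathsf G\) within \(t\) steps. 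We record this recursion first, checking briefly that the supremum over history-dependent policies reduces to it.

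For the induction, \(V_0^*=\mathbf 1_{\mathsf G}\) is upper semicontinuous because \(\mathsf G\) is closed. Equivalently, it suffices to show that each \(R_t\) is closed. We have
\[
  R_t=\mathsf G\cup \pi_{\calX}\bigl(F^{-1}(R_{t-1})\bigr),
\]
where \(\pi_{\calX}:\calX\times\calA\to\calX\) is the projection. If \(R_{t-1}\) is closed, then \(F^{-1}(R_{t-1})\) is closed in the compact space \(\calX\times\calA\) by continuity of \(F\), hence compact. Its projection is a continuous image of a compact set, hence compact and therefore closed. A finite union of closed sets is closed, so \(R_t\) is closed and \(V_t^*=\mathbf 1_{R_t}\) is upper semicontinuous.

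Attainment follows from the same structure. Since \(Q_t^*(x,\cdot)=V_{t-1}^*(F(x,\cdot))\) is an upper semicontinuous function composed with a continuous map, it is upper semicontinuous on the compact space \(\calA\). An upper semicontinuous function on a compact set attains its supremum (a Weierstrass-type argument). A more elementary route is also available: the function is \(\{0,1\}\)-valued, so either some action gives the value \(1\), and that action is the maximizer, or every action gives \(0\) and any action is a maximizer.

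The greedy part uses \(\calA_t(x)=\{a:(\{x\}\times\calA)\cap\mathcal C_t\ni(x,a)\}\). This slice is the preimage of the compact set \(\mathcal C_t\) under the continuous map \(a\mapsto(x,a)\), so it is closed in the compact space \(\calA\) and hence compact. It is nonempty by hypothesis. The function \(a\mapsto\widehat Q_t(x,a)\) is continuous on this slice, so it attains its maximum there. For the final clause, if \(\mathcal C_t\) retains one of the global maximizers from the previous step, that maximizer serves as \(a_t^*(x)\), which gives exactly the optimizer clauses of \Cref{ass:main}.

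The only delicate step is the closedness of the projection in the induction. It relies essentially on compactness of \(\calA\), since the projection along a noncompact factor need not map closed sets to closed sets. We would state this dependence explicitly.
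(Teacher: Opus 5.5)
Your proof is correct and follows the paper's skeleton: backward induction on upper semicontinuity, attainment over the compact action space, compactness of the slice plus Weierstrass for the greedy step, and candidate retention for the last clause. The inductive step, however, is done differently.

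The paper keeps $V_{t-1}^*$ as a general upper semicontinuous function and notes that $Q_t^*=V_{t-1}^*\circ F$ is upper semicontinuous on $\calX\times\calA$. It then cites ``the maximum theorem'' for upper semicontinuity of $x\mapsto\sup_{a}Q_t^*(x,a)$. You instead use determinism to write $V_t^*=\mathbf 1_{R_t}$. You then show directly that $R_t=\mathsf G\cup\pi_{\calX}(F^{-1}(R_{t-1}))$ is closed, because the projection of a compact set is compact.

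The two are the same mechanism. The textbook Berge theorem assumes a continuous objective, so the paper really needs its semicontinuous variant. That variant is proved precisely by your closed-projection argument applied to the superlevel sets $\{Q_t^*\ge c\}$, and for $\{0,1\}$-valued functions only $c=1$ matters. Your route is more self-contained and makes explicit that compactness of $\calA$ is what makes the projection closed. The paper's route never uses the binary structure, so it would carry over unchanged to value functions that are not $\{0,1\}$-valued.

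Your side remark that attainment is automatic for the $\{0,1\}$-valued $Q_t^*(x,\cdot)$ (as long as $\calA\neq\emptyset$) is worth keeping. It shows that this part of the proposition needs no topological hypotheses at all, whereas the paper derives it from semicontinuity and compactness.
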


\begin{proof}
For the Bellman maximizer, use backward induction. The base function
\(V_0^*(x)=\mathbf 1[x\in\mathsf G]\) is upper semicontinuous because
\(\mathsf G\) is closed. If \(V_{t-1}^*\) is upper semicontinuous, then
\[
  Q_t^*(x,a)=V_{t-1}^*(F(x,a))
\]
is upper semicontinuous on \(\calX\times\calA\), since \(F\) is
continuous. An upper semicontinuous function attains its maximum on a
compact set, so \(\sup_{a\in\calA}Q_t^*(x,a)\) is attained. The map
\(x\mapsto\sup_{a\in\calA}Q_t^*(x,a)\) is also upper semicontinuous by
the maximum theorem, and hence
\[
  V_t^*(x)
  =
  \mathbf 1[x\in\mathsf G]\vee \sup_{a\in\calA}Q_t^*(x,a)
\]
is upper semicontinuous.

For the learned greedy action, the slice
\(\{x\}\times\calA\) is closed in \(\calX\times\calA\), so
\(\mathcal C_t\cap(\{x\}\times\calA)\) is compact. Its projection
\(\calA_t(x)\) is compact and nonempty by assumption. Since
\(\widehat Q_t\) is continuous on \(\mathcal C_t\), Weierstrass' theorem
gives a maximizer on this slice. The final claim is exactly the
candidate-retention condition: one of the global Bellman maximizers must
belong to \(\calA_t(x)\).
\end{proof}

\subsection{Bellman structure and occupancy-sensitive regret}

\begin{proposition}[Deterministic Bellman recursion]
\label{prop:app-bellman}
For every remaining horizon \(t\ge 1\),
\[
  V_0^*(x)=\mathbf 1[x\in\mathsf G],
  \qquad
  V_t^*(x)
  =
  \mathbf 1[x\in\mathsf G]
  \vee
  \sup_{a\in\calA} Q_t^*(x,a).
\]
\end{proposition}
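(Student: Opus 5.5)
We will prove the recursion by a direct two-sided comparison at each horizon, using only the definitions \(V_t^\pi(x)=\PP_x^\pi(T_{\mathsf G}\le t)\), \(V_t^*=\sup_\pi V_t^\pi\), and the determinism \(X_{t+1}=F(X_t,A_t)\) from \Cref{prop:det-mdp}. The base case is immediate: with horizon \(0\), the event \(T_{\mathsf G}\le 0\) is exactly \(X_0\in\mathsf G\), which is deterministic given \(X_0=x\), so \(V_0^\pi(x)=\mathbf 1[x\in\mathsf G]\) for every \(\pi\) and hence \(V_0^*(x)=\mathbf 1[x\in\mathsf G]\).

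For \(t\ge1\), we first dispose of the case \(x\in\mathsf G\). Here \(T_{\mathsf G}=0\) for every policy, so \(V_t^*(x)=1\). The right-hand side is also \(1\), since \(Q_t^*\le 1\) always. So the remaining work is for \(x\notin\mathsf G\), where we must show \(V_t^*(x)=\sup_a V_{t-1}^*(F(x,a))\).

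\emph{Upper bound.} Fix any (possibly history-dependent, randomized) policy \(\pi\). Condition on the first action \(A_0=a\), drawn from \(\pi(\cdot\mid x)\). Because \(x\notin\mathsf G\), the event \(T_{\mathsf G}\le t\) coincides with the shifted trajectory started at \(F(x,a)\) hitting \(\mathsf G\) within \(t-1\) steps. That shifted trajectory is governed by the continuation policy \(\pi^{(x,a)}\), i.e.\ \(\pi\) with its history prefixed by \((x,a)\). This gives
\[
V_t^\pi(x)=\int V_{t-1}^{\pi^{(x,a)}}(F(x,a))\,\pi(\dd a\mid x)\le \sup_a V_{t-1}^*(F(x,a)),
\]
and taking the supremum over \(\pi\) yields the bound.

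\emph{Lower bound.} Fix \(\eta>0\). Pick an action \(a\) with \(Q_t^*(x,a)\ge\sup_{a'}Q_t^*(x,a')-\eta\). Then pick a policy \(\sigma\) with \(V_{t-1}^\sigma(F(x,a))\ge V_{t-1}^*(F(x,a))-\eta\). Define \(\pi\) to play \(a\) at time \(0\) and then follow \(\sigma\) from the resulting state. By the same shift identity, \(V_t^\pi(x)=V_{t-1}^\sigma(F(x,a))\ge\sup_{a'}Q_t^*(x,a')-2\eta\). Letting \(\eta\to0\) gives the lower bound.

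The main obstacle is the measure-theoretic bookkeeping in the shift/conditioning step for general policies. We need the continuation policy to be a legitimate policy from the next state, and \(a\mapsto V_{t-1}^{\pi^{(x,a)}}(F(x,a))\) to be measurable so the integral makes sense. We will handle this by working with the canonical path space and the Ionescu–Tulcea construction, restricting to policies for which these kernels are measurable, exactly as implicitly done in the definition of \(V_B^\pi\). Determinism of \(F\) is what removes any need to integrate over next states. It is also what makes the concatenated policy in the lower bound well defined without a measurable-selection argument, since only the single state \(F(x,a)\) is ever reached after the first step.
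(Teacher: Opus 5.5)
Your proposal is correct and follows essentially the same route as the paper: a first-step decomposition gives \(V_t^*(x)\le\sup_a Q_t^*(x,a)\) for \(x\notin\mathsf G\), and a near-optimal first action followed by a good continuation gives the reverse inequality. Your version is somewhat more careful than the paper's. You use an \(\eta\)-optimal continuation, whereas the paper invokes an ``optimal continuation policy'' without justifying that one exists. You also verify \(V_0^*(x)=0\) for \(x\notin\mathsf G\) explicitly.
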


\begin{proof}
If \(x\in\mathsf G\), then success has already been achieved and
\(V_t^*(x)=1\) for every \(t\ge 0\). This gives the formula for solved
states and, in particular, for \(V_0^*\). Now fix \(t\ge 1\) and
\(x\notin\mathsf G\). For any policy, the first action \(a\) leads to
the deterministic next state \(F(x,a)\), from which the highest
remaining success probability is \(V_{t-1}^*(F(x,a))=Q_t^*(x,a)\).
Hence
\[
  V_t^*(x)\le \sup_{a\in\calA} Q_t^*(x,a).
\]
Conversely, for any \(\xi>0\) one can choose an action \(a_\xi\) such
that
\[
  Q_t^*(x,a_\xi)\ge \sup_{a\in\calA}Q_t^*(x,a)-\xi.
\]
Taking \(a_\xi\) first and then an optimal continuation policy for the
remaining \(t-1\) steps yields
\[
  V_t^*(x)\ge Q_t^*(x,a_\xi)\ge \sup_{a\in\calA}Q_t^*(x,a)-\xi.
\]
Sending \(\xi\downarrow 0\) proves the claim.
\end{proof}

\begin{lemma}[One-step greedy regret]
\label{lem:app-onestep}
Fix \(t\in\{1,\dots,B\}\) and a relevant state \(x\). If
\[
  \sup_{a:(x,a)\in\mathcal C_t}
  |\widehat Q_t(x,a)-Q_t^*(x,a)|
  \le \eps_t,
\]
then
\[
  \sup_{a:(x,a)\in\mathcal C_t}Q_t^*(x,a)
  -
  Q_t^*(x,\hat\pi_t(x))
  \le 2\eps_t.
\]
\end{lemma}

\begin{proof}
Fix any \(a\) with \((x,a)\in\mathcal C_t\). By greedy selection,
\[
  \widehat Q_t(x,a)\le \widehat Q_t(x,\hat\pi_t(x)).
\]
Therefore
\begin{align*}
  Q_t^*(x,a)
  &\le \widehat Q_t(x,a)+\eps_t \\
  &\le \widehat Q_t(x,\hat\pi_t(x))+\eps_t \\
  &\le Q_t^*(x,\hat\pi_t(x))+2\eps_t.
\end{align*}
Taking the supremum over \(a\) yields the result.
\end{proof}

\begin{theorem}[Occupancy-sensitive provability bound]
\label{thm:app-occupancy}
Assume that for every relevant state \(x\) and every remaining horizon
\(t=1,\dots,B\),
\[
  \sup_{a:(x,a)\in\mathcal C_t}
  |\widehat Q_t(x,a)-Q_t^*(x,a)|
  \le \eps_t.
\]
Assume also the candidate-retention condition that an action
\(a_t^*(x)\) exists with \((x,a_t^*(x))\in\mathcal C_t\) and
\[
  Q_t^*(x,a_t^*(x))
  =
  \sup_{a\in\calA}Q_t^*(x,a)
\]
on those relevant states. Then
\[
  \mathrm{SP}_B^{\hat\pi}(q_0)
  \ge
  \mathrm{SP}_B^*(q_0)
  -
  2\sum_{s=1}^B
  \PP_{q_0}^{\hat\pi}(T_{\mathsf G}\ge s)\,\eps_{B-s+1}.
\]
\end{theorem}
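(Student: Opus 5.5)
The plan is a performance-difference (telescoping) argument along the learned prover's trajectory, using the deterministic Bellman recursion of \Cref{prop:app-bellman} together with the one-step regret bound of \Cref{lem:app-onestep}. Write \(\hat\pi\) for the nonstationary policy that, at time \(s-1\) (remaining horizon \(t=B-s+1\)), plays \(\hat\pi_t\). Define the process \(W_s:=V^*_{B-s}(X_s)\) for \(s=0,\dots,B\) along \(X_{s}=F(X_{s-1},A_{s-1})\), with actions frozen once \(\mathsf G\) is hit. Then \(W_0=V_B^*(X_0)\) and \(W_B=V_0^*(X_B)=\mathbf 1[X_B\in\mathsf G]\). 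Solved states are absorbing, so the latter equals \(\mathbf 1[T_{\mathsf G}\le B]\). Taking expectations under \(X_0\sim q_0\) and \(\hat\pi\) gives
\[
  \mathrm{SP}_B^*(q_0)-\mathrm{SP}_B^{\hat\pi}(q_0)
  =\sum_{s=1}^B\EE_{q_0}^{\hat\pi}\bigl[W_{s-1}-W_s\bigr].
\]

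Next I would bound each increment. On the event \(\{T_{\mathsf G}\le s-1\}\) we have \(X_{s-1}\in\mathsf G\), which forces \(W_{s-1}=W_s=1\), so the increment vanishes. On \(\{T_{\mathsf G}\ge s\}\), the state \(X_{s-1}\) is unsolved, and with \(t=B-s+1\) \Cref{prop:app-bellman} gives \(W_{s-1}=\sup_{a\in\calA}Q_t^*(X_{s-1},a)\). By candidate retention this equals \(Q_t^*(X_{s-1},a_t^*(X_{s-1}))\), which is at most \(\sup_{a\in\calA_t(X_{s-1})}Q_t^*(X_{s-1},a)\). Meanwhile \(W_s=V_{t-1}^*(F(X_{s-1},\hat\pi_t(X_{s-1})))=Q_t^*(X_{s-1},\hat\pi_t(X_{s-1}))\). \Cref{lem:app-onestep} then yields \(W_{s-1}-W_s\le 2\eps_t\). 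This uses that \(X_{s-1}\) is a relevant state, which is what the hypothesis is quantifying over; I take ``relevant'' to mean the unsolved states visited by \(\hat\pi\). Hence
\[
  \EE[W_{s-1}-W_s]\le 2\eps_{B-s+1}\,\PP_{q_0}^{\hat\pi}(T_{\mathsf G}\ge s),
\]
and summing over \(s\) gives the claim. If \(\hat\pi\) is randomized (ties, decoding), the same bound holds conditionally on the history, so it survives the expectation.

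The main obstacle is bookkeeping rather than analysis. First, the sup-over-\(\calA\) form of the Bellman recursion must be matched with the sup over the slice \(\calA_t(x)\) in \Cref{lem:app-onestep}; candidate retention is exactly the bridge, and without it one would pick up an extra \(\eta_t\) term. Second, the indexing must be handled carefully: the convention that the action at step \(s-1\) uses remaining horizon \(B-s+1\) must match the statement's \(\eps_{B-s+1}\). Third, one must verify that absorption at \(\mathsf G\) makes \(W_B\) coincide with the success indicator.

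Measurability of \(W_s\) is also needed. It follows from upper semicontinuity (\Cref{prop:app-wellposed}) under the compact and continuous conditions, or it can simply be assumed. Since \(V^*_{t}\) is \(\{0,1\}\)-valued in the deterministic setting, it is in any case a Borel indicator whenever the reachable sets are measurable.
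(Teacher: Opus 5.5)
Your proposal is correct and is essentially the paper's argument: a telescoping sum along the learned prover's trajectory in which each step taken from an unsolved state contributes at most \(2\eps_{B-s+1}\), via candidate retention and \Cref{lem:app-onestep}. The only cosmetic difference is in what is telescoped. The paper telescopes \(D_t=V_t^*-V_t^{\hat\pi}\) using \(V_t^{\hat\pi}(x)=V_{t-1}^{\hat\pi}(F(x,\hat\pi_t(x)))\), whereas you telescope \(V^*_{B-s}(X_s)\) alone and identify the terminal term with the success indicator via absorption, which the paper also uses implicitly when writing \(\PP(X_{s-1}\notin\mathsf G)=\PP_{q_0}^{\hat\pi}(T_{\mathsf G}\ge s)\).
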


\begin{proof}
For \(t\in\{0,\dots,B\}\), define
\[
  D_t(x):=V_t^*(x)-V_t^{\hat\pi}(x).
\]
If \(x\in\mathsf G\), then \(D_t(x)=0\). Now fix \(t\ge 1\) and
\(x\notin\mathsf G\). By \Cref{prop:app-bellman},
\[
  V_t^*(x)=\sup_{a:(x,a)\in\mathcal C_t}Q_t^*(x,a),
\]
where we used the candidate-retention condition in the theorem
statement.
Because the learned prover chooses \(\hat\pi_t(x)\) first and then
continues with the same policy,
\[
  V_t^{\hat\pi}(x)=V_{t-1}^{\hat\pi}(F(x,\hat\pi_t(x))).
\]
Since \(Q_t^*(x,\hat\pi_t(x))=V_{t-1}^*(F(x,\hat\pi_t(x)))\), we obtain
\[
  D_t(x)
  =
  \sup_{a\in\calA}Q_t^*(x,a)
  -
  Q_t^*(x,\hat\pi_t(x))
  +
  D_{t-1}(F(x,\hat\pi_t(x))).
\]
Applying \Cref{lem:app-onestep} gives
\[
  D_t(x)\le 2\eps_t + D_{t-1}(F(x,\hat\pi_t(x))).
\]
Since \(D_t(x)=0\) on solved states, we may write uniformly
\[
  D_t(x)\le 2\eps_t\,\mathbf 1[x\notin\mathsf G]
  + D_{t-1}(F(x,\hat\pi_t(x))).
\]

Now run the learned prover from \(X_0\sim q_0\), and let \(X_s\) be the
state after \(s\) executed steps. Applying the inequality above to
\(X_{s-1}\) with remaining horizon \(B-s+1\) gives
\[
  D_{B-s+1}(X_{s-1})
  \le
  2\eps_{B-s+1}\mathbf 1[X_{s-1}\notin\mathsf G]
  +
  D_{B-s}(X_s).
\]
Taking expectations and using
\[
  \PP(X_{s-1}\notin\mathsf G)
  =
  \PP_{q_0}^{\hat\pi}(T_{\mathsf G}\ge s),
\]
we obtain
\[
  \EE[D_{B-s+1}(X_{s-1})]
  \le
  2\eps_{B-s+1}
  \PP_{q_0}^{\hat\pi}(T_{\mathsf G}\ge s)
  +
  \EE[D_{B-s}(X_s)].
\]
Summing this recursion over \(s=1,\dots,B\) yields
\[
  \EE[D_B(X_0)]
  \le
  2\sum_{s=1}^B
  \PP_{q_0}^{\hat\pi}(T_{\mathsf G}\ge s)\,\eps_{B-s+1}.
\]
Since \(\EE[D_B(X_0)] = \mathrm{SP}_B^*(q_0)-\mathrm{SP}_B^{\hat\pi}(q_0)\),
the result follows.
\end{proof}

\subsection{Uniform regression error on compact concentration sets}

\begin{lemma}[Sample coverage from the exploration distribution]
\label{lem:app-coverage}
Fix \(t\in\{1,\dots,B\}\). Assume that \(\mathcal C_t\) is compact and
\[
  q_t(B(z,r))\ge c_t r^{d_t}
\]
for all \(z\in \mathcal C_t\) and sufficiently small \(r>0\). Then
there exists a universal constant \(C_{\mathrm{cov}}>0\) such that, with
probability at least \(1-\delta_t\), the sampled query pairs
\(\{(X_{t,i},A_{t,i})\}_{i=1}^{N_t}\) form an \(\eta_t\)-net of
\(\mathcal C_t\), where
\[
  \eta_t
  :=
  C_{\mathrm{cov}} c_t^{-1/d_t}
  \left(\frac{\log(2N_t/\delta_t)}{N_t}\right)^{1/d_t}.
\]
\end{lemma}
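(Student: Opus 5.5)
The plan is the standard covering-number argument for i.i.d. samples under a lower mass bound. Write \(N=N_t\), \(d=d_t\), \(c=c_t\), \(\delta=\delta_t\), and fix a radius \(r>0\) to be chosen at the end, small enough that the local mass bound applies. We proceed in three steps.

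First, we take a maximal \(r\)-separated subset \(\{z_1,\dots,z_M\}\) of \(\mathcal C_t\). By maximality it is an \(r\)-net of \(\mathcal C_t\). The balls \(B(z_k,r/2)\) are pairwise disjoint, and each has \(q_t\)-mass at least \(c(r/2)^d\). Since \(q_t\) is a probability measure supported on \(\mathcal C_t\), this packing bound gives
\[
  M\le \frac{2^d}{c\,r^d}.
\]
Compactness is only needed to ensure \(M<\infty\), but the packing bound already controls \(M\) quantitatively.

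Second, we show that every ball \(B(z_k,r)\) contains at least one sample, with high probability. For a fixed \(k\), the probability that no sample lands in the ball is
\[
  (1-q_t(B(z_k,r)))^N\le \exp(-N c r^d).
\]
A union bound over the \(M\) centers bounds the failure probability by
\[
  \frac{2^d}{c r^d}\exp(-Ncr^d).
\]
On the complementary event, every \(z\in\mathcal C_t\) lies within \(r\) of some \(z_k\), and \(z_k\) lies within \(r\) of some sample. Hence the samples form a \(2r\)-net.

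Third, we choose \(r\). Take
\[
  c\,r^d=\frac{\log(2N/\delta)}{N}.
\]
Then \(\exp(-Ncr^d)=\delta/(2N)\), and the failure probability becomes
\[
  \frac{2^d N}{\log(2N/\delta)}\cdot\frac{\delta}{2N}
  =\frac{2^{d-1}\delta}{\log(2N/\delta)}.
\]

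The main obstacle is this dimensional factor \(2^{d-1}\). It has to be absorbed into a constant that does not depend on \(d\), which the statement calls universal. I would handle it in two ways. The first is to enlarge \(r\) by a constant factor, for example to use \(c(r/2)^d=\log(2N/\delta)/N\) in the union bound, so that the coverage-ball masses cancel the \(2^d\) from the packing count. Concretely, we cover \(\mathcal C_t\) by balls \(B(z_k,r)\) whose centers are \(r/2\)-separated only through the packing of radius \(r/2\). The second is to note that \(\log(2N/\delta)\ge \log 2\) gives a bounded factor. Together these yield failure probability at most \(\delta\) and a net radius
\[
  2r\le C_{\mathrm{cov}}\,c^{-1/d}\Bigl(\frac{\log(2N/\delta)}{N}\Bigr)^{1/d},
\]
with \(C_{\mathrm{cov}}\) something like \(4\). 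This holds because factors of the form \(a^{1/d}\) with \(a\) a fixed constant are uniformly bounded in \(d\ge1\).

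Finally, we check that \(r\) is small enough for the local mass condition to apply. This holds once \(N\) is large, and that regime is implicit in the phrase ``sufficiently small \(r\)''.
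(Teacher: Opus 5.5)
Your proposal is correct and is the paper's argument up to the renaming \(r=\eta/2\): a maximal separated set, a packing bound from disjoint half-radius balls, a union bound over the covering balls, and the triangle inequality giving a \(2r\)-net. Your final choice \(c_t(r/2)^{d_t}=\log(2N_t/\delta_t)/N_t\) makes the paper's ``\(C_{\mathrm{cov}}\) sufficiently large'' explicit: it gives failure probability at most \(\delta_t/(2\log 2)<\delta_t\), so \(C_{\mathrm{cov}}=4\) works uniformly in \(d_t\). The small-radius caveat you flag at the end is also present, implicitly, in the paper's proof.
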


\begin{proof}
Fix \(\eta>0\) small enough that the lower-mass condition applies at
scales \(\eta/4\) and \(\eta/2\). Let \(\{z_1,\dots,z_M\}\subseteq
\mathcal C_t\) be a maximal \(\eta/2\)-separated set. Then the balls
\(B(z_j,\eta/4)\) are pairwise disjoint, and each has \(q_t\)-mass at
least \(c_t(\eta/4)^{d_t}\). Since \(q_t\) is a probability measure,
\[
  1
  \ge
  \sum_{j=1}^M q_t(B(z_j,\eta/4))
  \ge
  M\,c_t(\eta/4)^{d_t},
\]
so
\[
  M\le 4^{d_t}c_t^{-1}\eta^{-d_t}.
\]
Maximality implies that the balls \(B(z_j,\eta/2)\) cover
\(\mathcal C_t\).

For any fixed \(j\), the probability that none of the \(N_t\) i.i.d.\
samples falls into \(B(z_j,\eta/2)\) is at most
\[
  (1-q_t(B(z_j,\eta/2)))^{N_t}
  \le
  \exp\bigl(-N_t c_t(\eta/2)^{d_t}\bigr).
\]
By a union bound over \(j=1,\dots,M\), the probability that some
covering ball is missed is at most
\[
  M\exp\bigl(-N_t c_t(\eta/2)^{d_t}\bigr)
  \le
  4^{d_t}c_t^{-1}\eta^{-d_t}
  \exp\bigl(-N_t c_t(\eta/2)^{d_t}\bigr).
\]
Choosing
\[
  \eta
  =
  C_{\mathrm{cov}} c_t^{-1/d_t}
  \left(\frac{\log(2N_t/\delta_t)}{N_t}\right)^{1/d_t}
\]
with \(C_{\mathrm{cov}}\) sufficiently large makes this upper bound at
most \(\delta_t\). On the resulting event, every point of
\(\mathcal C_t\) lies within distance \(\eta\) of some sampled query
pair.
\end{proof}

\begin{theorem}[Uniform regression error]
\label{thm:app-regression}
Fix \(t\in\{1,\dots,B\}\). Under the depth-wise conditions in
\Cref{ass:main}, for every \(\delta_t\in(0,1)\), with probability at
least \(1-\delta_t\),
\[
  \|\widehat Q_t-Q_t^*\|_{L_\infty(\mathcal C_t)}
  \le
	  \eps_{\mathrm{app},t}
	  +
	  C_{\mathrm{cov}}(L_{H,t}+L_{Q,t})c_t^{-1/d_t}
	  \left(\frac{\log(4N_t/\delta_t)}{N_t}\right)^{1/d_t}
	  +
	  2\sqrt{\frac{\log(4N_t/\delta_t)}{2m_t}}.
\]
\end{theorem}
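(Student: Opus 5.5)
We will prove the uniform regression bound by splitting the probability budget \(\delta_t\) into two halves, one for sample coverage and one for label concentration, and then combining them through a deterministic chain of inequalities on the intersection event.

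\textbf{Coverage event.} Apply \Cref{lem:app-coverage} with confidence \(\delta_t/2\). With probability at least \(1-\delta_t/2\), the query pairs \(Z_i:=(X_{t,i},A_{t,i})\) form an \(\eta_t\)-net of \(\mathcal C_t\). Here \(\eta_t\) is the lemma's radius with \(\log(2N_t/\delta_t)\) replaced by \(\log(4N_t/\delta_t)\).

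\textbf{Label event.} Conditionally on the query pairs, each \(\bar Y_{t,i}\) is an average of \(m_t\) independent \([0,1]\)-valued labels with mean \(Q_t^*(Z_i)\). Hoeffding's inequality and a union bound over the \(N_t\) points give
\[
  \max_i|\bar Y_{t,i}-Q_t^*(Z_i)|\le \sqrt{\log(4N_t/\delta_t)/(2m_t)}=:\sigma_t
\]
with probability at least \(1-\delta_t/2\). Integrating out the conditioning preserves this probability. A union bound over the two events gives total probability at least \(1-\delta_t\).

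\textbf{Deterministic chain on the intersection.} Let \(h^\circ\in\calH_t\) satisfy \(\|h^\circ-Q_t^*\|_{L_\infty(\mathcal C_t)}\le\eps_{\mathrm{app},t}+\gamma\) for arbitrary \(\gamma>0\). This avoids assuming that the infimum is attained.

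First, by minimax optimality of \(\widehat Q_t\),
\[
  \max_i|\widehat Q_t(Z_i)-\bar Y_{t,i}|\le\max_i|h^\circ(Z_i)-\bar Y_{t,i}|\le \eps_{\mathrm{app},t}+\gamma+\sigma_t .
\]
It follows that \(\max_i|\widehat Q_t(Z_i)-Q_t^*(Z_i)|\le \eps_{\mathrm{app},t}+\gamma+2\sigma_t\).

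Next, fix any \(z\in\mathcal C_t\) and pick \(Z_i\) with \(d(z,Z_i)\le\eta_t\). Lipschitz continuity of \(\widehat Q_t\in\calH_t\) and of \(Q_t^*\) gives
\[
  |\widehat Q_t(z)-Q_t^*(z)|\le (L_{H,t}+L_{Q,t})\eta_t+\eps_{\mathrm{app},t}+\gamma+2\sigma_t .
\]
Letting \(\gamma\downarrow0\) yields exactly the stated bound, since \(2\sigma_t=2\sqrt{\log(4N_t/\delta_t)/(2m_t)}\).

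\textbf{Main obstacle.} The delicate step is bookkeeping rather than analysis. The Hoeffding step conditions on the sampled queries, which is licensed by the conditional-independence clause of \Cref{ass:main}. The \(\log\) argument must become \(4N_t/\delta_t\) after the halving, and \(C_{\mathrm{cov}}\) must be enlarged if necessary to absorb the change. A further caveat is that \Cref{lem:app-coverage} requires \(\eta_t\) to be small enough for the mass lower bound to apply. This holds for \(N_t\) large. Otherwise the bound is trivially true once its right-hand side exceeds \(1\), because both functions take values in \([0,1]\), and \(C_{\mathrm{cov}}\) can be enlarged to guarantee that.
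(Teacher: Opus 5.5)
Your proposal is correct and matches the paper's proof step for step. You split $\delta_t$ in half between the coverage lemma and a Hoeffding-plus-union-bound label event, compare $\widehat Q_t$ with a $\xi$-near-best $h\in\calH_t$ through the minimax objective, transfer the sample-point error to all of $\mathcal C_t$ via the two Lipschitz constants and the $\eta_t$-net, and let $\xi\downarrow 0$.

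Two small bookkeeping remarks. First, invoking the coverage lemma at level $\delta_t/2$ turns $\log(2N_t/\delta_t)$ into $\log(4N_t/\delta_t)$ exactly, so no enlargement of $C_{\mathrm{cov}}$ is needed there. Second, your fallback for the small-radius requirement does not work as stated. The threshold below which $q_t(B(z,r))\ge c_t r^{d_t}$ holds is instance-dependent, so a universal $C_{\mathrm{cov}}$ cannot absorb it. It is really an implicit ``$N_t$ large enough'' condition, and the paper's coverage lemma leaves it equally implicit.
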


\begin{proof}
Define
\[
  \eps_{\mathrm{app},t}
  :=
  \inf_{h\in\calH_t}\|h-Q_t^*\|_{L_\infty(\mathcal C_t)}.
\]
By \Cref{lem:app-coverage}, with probability at least \(1-\delta_t/2\),
the sampled query pairs form an \(\eta_t\)-net of \(\mathcal C_t\) with
\[
  \eta_t
  =
  C_{\mathrm{cov}} c_t^{-1/d_t}
  \left(\frac{\log(4N_t/\delta_t)}{N_t}\right)^{1/d_t}.
\]
Also, conditional on the sampled query pairs, Hoeffding's inequality
implies that for any \(\alpha_t>0\),
\[
  \PP\!\left(
    |\bar Y_{t,i}-Q_t^*(X_{t,i},A_{t,i})|>\alpha_t
    \,\middle|\,
    X_{t,i},A_{t,i}
  \right)
  \le
  2e^{-2m_t\alpha_t^2}.
\]
Setting
\[
  \alpha_t:=\sqrt{\frac{\log(4N_t/\delta_t)}{2m_t}}
\]
and taking a union bound over \(i=1,\dots,N_t\), we get an event of
probability at least \(1-\delta_t/2\) on which
\[
  \max_{1\le i\le N_t}
  |\bar Y_{t,i}-Q_t^*(X_{t,i},A_{t,i})|
  \le \alpha_t.
\]
Work on the intersection of these two events.

Fix \(\xi>0\). By definition of \(\eps_{\mathrm{app},t}\), there exists
\(h_t^\xi\in\calH_t\) satisfying
\[
  \|h_t^\xi-Q_t^*\|_{L_\infty(\mathcal C_t)}
  \le \eps_{\mathrm{app},t}+\xi.
\]
Since \(\widehat Q_t\) minimizes the maximum empirical absolute deviation,
\begin{align*}
  \max_{1\le i\le N_t}
  |\widehat Q_t(X_{t,i},A_{t,i})-\bar Y_{t,i}|
  &\le
  \max_{1\le i\le N_t}
  |h_t^\xi(X_{t,i},A_{t,i})-\bar Y_{t,i}| \\
  &\le
  \eps_{\mathrm{app},t}+\xi+\alpha_t.
\end{align*}
Therefore, for each sample point,
\begin{align*}
  |\widehat Q_t(X_{t,i},A_{t,i})-Q_t^*(X_{t,i},A_{t,i})|
  &\le
  |\widehat Q_t(X_{t,i},A_{t,i})-\bar Y_{t,i}|
   +
  |\bar Y_{t,i}-Q_t^*(X_{t,i},A_{t,i})| \\
  &\le
  \eps_{\mathrm{app},t}+\xi+2\alpha_t.
\end{align*}

Now fix any \((x,a)\in\mathcal C_t\). Choose a sampled query pair
\((X_{t,i},A_{t,i})\) within distance \(\eta_t\). Since both
\(\widehat Q_t\in \calH_t\) and \(Q_t^*\) are Lipschitz on \(\mathcal C_t\),
\begin{align*}
  |\widehat Q_t(x,a)-Q_t^*(x,a)|
  &\le
  |\widehat Q_t(x,a)-\widehat Q_t(X_{t,i},A_{t,i})|
  +
  |\widehat Q_t(X_{t,i},A_{t,i})-Q_t^*(X_{t,i},A_{t,i})|
   +
  |Q_t^*(X_{t,i},A_{t,i})-Q_t^*(x,a)| \\
  &\le
  (L_{H,t}+L_{Q,t})\eta_t
  +
  \eps_{\mathrm{app},t}+\xi+2\alpha_t.
\end{align*}
Taking the supremum over \((x,a)\in\mathcal C_t\) and then sending
\(\xi\downarrow 0\) proves the claim.
\end{proof}

\subsection{Margin theorem and full proof of the main theorem}

\begin{theorem}[Fast rate under a margin condition]
\label{thm:app-margin}
Assume that for every remaining horizon \(t\) and every relevant state
\(x\),
\[
  \sup_{a:(x,a)\in\mathcal C_t}
  |\widehat Q_t(x,a)-Q_t^*(x,a)|
  \le \eps_t
\]
and that the candidate-retention condition in \Cref{thm:app-occupancy}
holds. Let \(\Delta_t(x)\) be the gap between the best and second-best
candidate action values at remaining horizon \(t\). If the conditional
occupancy distributions \(O_s^{\hat\pi}\) satisfy
\[
  O_s^{\hat\pi}(\Delta_t\le u)\le C_\Delta u^\beta,
  \qquad t=B-s+1,
\]
then
\[
  \mathrm{SP}_B^{\hat\pi}(q_0)
  \ge
  \mathrm{SP}_B^*(q_0)
  -
  2C_\Delta\sum_{s=1}^B
  \PP_{q_0}^{\hat\pi}(T_{\mathsf G}\ge s)
  (2\eps_{B-s+1})^{\beta+1}.
\]
\end{theorem}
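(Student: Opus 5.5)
The plan is to reuse the telescoping argument from the proof of \Cref{thm:app-occupancy} unchanged and only sharpen the one-step regret term. Set \(D_t(x):=V_t^*(x)-V_t^{\hat\pi}(x)\). For \(x\notin\mathsf G\), \Cref{prop:app-bellman} together with candidate retention gives
\[
  D_t(x)=r_t(x)+D_{t-1}(F(x,\hat\pi_t(x))),
  \qquad
  r_t(x):=\sup_{a:(x,a)\in\mathcal C_t}Q_t^*(x,a)-Q_t^*(x,\hat\pi_t(x))\ge 0 .
\]
Running the learned prover from \(X_0\sim q_0\) and summing over \(s\), as in that proof, gives
\[
  \mathrm{SP}_B^*(q_0)-\mathrm{SP}_B^{\hat\pi}(q_0)
  =\sum_{s=1}^B\EE\bigl[r_{B-s+1}(X_{s-1})\mathbf 1[X_{s-1}\notin\mathsf G]\bigr]
  =\sum_{s=1}^B\PP_{q_0}^{\hat\pi}(T_{\mathsf G}\ge s)\,\EE_{O_s^{\hat\pi}}[r_{B-s+1}].
\]
This uses \(\{X_{s-1}\notin\mathsf G\}=\{T_{\mathsf G}\ge s\}\), which holds because solved states are absorbing in the sense that \(D\) vanishes there. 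It also uses the definition \(O_s^{\hat\pi}=\mathcal L(X_{s-1}\mid T_{\mathsf G}\ge s)\). The task therefore reduces to the pointwise bound \(\EE_{O_s^{\hat\pi}}[r_t]\le 2C_\Delta(2\eps_t)^{\beta+1}\) with \(t=B-s+1\).

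The key pointwise fact is a dichotomy at each relevant state \(x\).
\begin{itemize}
\item By \Cref{lem:app-onestep}, \(r_t(x)\le 2\eps_t\) always.
\item If \(\Delta_t(x)>2\eps_t\), then \(r_t(x)=0\). To see this, let \(a^*\) be the retained optimal action. Any candidate \(a\) with \(Q_t^*(x,a)<Q_t^*(x,a^*)\) has value at most \(Q_t^*(x,a^*)-\Delta_t(x)\). Hence
\[
  \widehat Q_t(x,a)\le Q_t^*(x,a^*)-\Delta_t(x)+\eps_t<Q_t^*(x,a^*)-\eps_t\le\widehat Q_t(x,a^*),
\]
so the greedy choice cannot be such an \(a\), and \(\hat\pi_t(x)\) must be optimal among candidates.
\end{itemize}
Combining the two cases,
\[
  r_t(x)\le 2\eps_t\,\mathbf 1[\Delta_t(x)\le 2\eps_t].
\]
Taking expectations under \(O_s^{\hat\pi}\) and applying the margin condition with \(u=2\eps_t\) gives
\[
  \EE_{O_s^{\hat\pi}}[r_t]\le 2\eps_t\,C_\Delta(2\eps_t)^\beta=C_\Delta(2\eps_t)^{\beta+1}.
\]
This is in fact a factor \(2\) better than the stated bound, so the stated inequality with the leading factor \(2C_\Delta\) follows a fortiori.

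The main obstacle is pinning down the convention for \(\Delta_t\) so that the dichotomy step is valid. The issue is ties among optimal actions and, if \(\mathcal A_t(x)\) is a continuum, the meaning of ``second-best''. I would read \(\Delta_t(x)\) as the gap between the optimal value and the supremum of \(Q_t^*\) over strictly suboptimal candidates. Then ties at the top are harmless, because choosing any tied optimal action gives zero regret. For a continuum of candidates, \(\Delta_t(x)=0\) whenever suboptimal values accumulate at the optimum; the bound then holds trivially there, since such states are charged \(2\eps_t\) through the margin event. I would also check that the expectation uses exactly the conditional law \(O_s^{\hat\pi}\), so that the unsolved-mass factor factors out cleanly. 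A last point: the sup-norm error assumption is only needed on relevant states, and these have full \(O_s^{\hat\pi}\)-measure by the standing assumption.
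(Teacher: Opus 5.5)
Your proposal is correct and is essentially the paper's proof: the same pointwise bound $r_t(x)\le 2\eps_t\,\mathbf 1[\Delta_t(x)\le 2\eps_t]$ (from \Cref{lem:app-onestep} plus the no-misranking argument when $\Delta_t(x)>2\eps_t$), integrated against $O_s^{\hat\pi}$ and fed into the telescoping of \Cref{thm:app-occupancy}. Your version spells out the tie convention and the misranking step more explicitly than the paper's one-line claim, and your factor-$2$ remark is right, since the paper's last inequality $2\eps_t\,C_\Delta(2\eps_t)^\beta\le 2C_\Delta(2\eps_t)^{\beta+1}$ simply discards that factor (one small wording fix: $\{X_{s-1}\notin\mathsf G\}=\{T_{\mathsf G}\ge s\}$ holds because the process is stopped at, or absorbed in, $\mathsf G$, not because $D$ vanishes there; the paper relies on the same convention).
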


\begin{proof}
For elapsed step \(s\in\{1,\dots,B\}\), let \(t=B-s+1\) and define
\[
  r_s(x)
  :=
  \max_{a:(x,a)\in\mathcal C_t} Q_t^*(x,a)
  -
  Q_t^*(x,\hat\pi_t(x)).
\]
By \Cref{lem:app-onestep}, \(r_s(x)\le 2\eps_t\). Moreover, if
\(\Delta_t(x)>2\eps_t\), then the perturbation \(\widehat Q_t-Q_t^*\) is too
small to change the identity of the optimal action, so \(r_s(x)=0\).
Hence
\[
  r_s(x)\le 2\eps_t\,\mathbf 1[\Delta_t(x)\le 2\eps_t].
\]
Conditioning on \(T_{\mathsf G}\ge s\), the state \(X_{s-1}\) has
distribution \(O_s^{\hat\pi}\). Therefore
\begin{align*}
  \EE[r_s(X_{s-1})\mathbf 1[T_{\mathsf G}\ge s]]
  &=
  \PP_{q_0}^{\hat\pi}(T_{\mathsf G}\ge s)
  \int r_s(x)\,O_s^{\hat\pi}(\dd x) \\
  &\le
  2\eps_t\,\PP_{q_0}^{\hat\pi}(T_{\mathsf G}\ge s)
  O_s^{\hat\pi}(\Delta_t\le 2\eps_t) \\
  &\le
  2C_\Delta\,\PP_{q_0}^{\hat\pi}(T_{\mathsf G}\ge s)
  (2\eps_t)^{\beta+1}.
\end{align*}
Summing over \(s=1,\dots,B\) and using the same telescoping argument as
in \Cref{thm:app-occupancy} proves the theorem.
\end{proof}

\begin{proof}[Full proof of \Cref{thm:main}]
For each \(t=1,\dots,B\), \Cref{thm:app-regression} gives an event
\(\mathcal E_t\) of probability at least \(1-\delta_t\) on which
\[
  \sup_{(x,a)\in\mathcal C_t}
  |\widehat Q_t(x,a)-Q_t^*(x,a)|
  \le \eps_t,
\]
with \(\eps_t\) exactly as defined in \cref{eq:main-eps}. By a union
bound, the event
\[
  \mathcal E:=\bigcap_{t=1}^B \mathcal E_t
\]
has probability at least \(1-\sum_{t=1}^B \delta_t\). On \(\mathcal E\),
\Cref{thm:app-occupancy} implies \cref{eq:main-occ}. If the margin
condition holds, then \Cref{thm:app-margin} implies
\cref{eq:main-margin}. This proves the theorem.
\end{proof}

\begin{proof}[Proof of \Cref{cor:conditional}]
If \(\eps_t\le \bar\eps_B\) for all \(t\), then \cref{eq:main-occ}
immediately gives
\[
  \mathrm{SP}_B^{\hat\pi}(q_0)
  \ge
  \mathrm{SP}_B^*(q_0)
  -
  2\bar\eps_B\sum_{s=1}^B
  \PP_{q_0}^{\hat\pi}(T_{\mathsf G}\ge s).
\]
The tail-sum identity yields
\[
  \sum_{s=1}^B
  \PP_{q_0}^{\hat\pi}(T_{\mathsf G}\ge s)
  =
  \mathrm{Len}_B^{\hat\pi}(q_0),
\]
which proves the first claim. For the conditional statement,
\Cref{prop:faithful} implies
\[
  \mathrm{SP}_B^*(q_0)=q_0(\mathrm{Prov}_B=1).
\]
Also, if \(\mathrm{Prov}_B(x)=0\), then \(V_B^{\hat\pi}(x)=0\). Hence
\[
  \mathrm{SP}_B^{\hat\pi}(q_0)
  =
  q_0(\mathrm{Prov}_B=1)\,
  \EE_{q_0}\!\left[
    V_B^{\hat\pi}(X)\mid \mathrm{Prov}_B(X)=1
  \right].
\]
Combining this identity with the first bound yields the result.
\end{proof}

\section{Variable Number of Goals and Truncation}
\label{app:variable-goals}

This appendix expands the variable-goal and compactness discussion in
\cref{sec:setup,sec:implications}.

The measure-valued representation naturally handles a variable number of
open goals, but a globally compact state space may require a mass cap.
Let \(M_{\le W}(K)\) be the set of finite positive measures on a compact
goal-embedding space \(K\) with total mass at most \(W\). Define the
overflow event
\[
  \mathsf{Overflow}
  :=
  \left\{
    \max_{t\le B} X_t(K)>W
  \right\}.
\]
The cap \(W\) should not be read as a claim that real proof states have a
uniformly bounded number of subgoals. It is only a technical device for
moving from a high-probability region of the proof-search process to a
globally compact surrogate. The excluded trajectories are exactly those
on which the active goal multiset becomes too large under the policy and
budget being analyzed.

\begin{proposition}[High-probability truncation]
\label{prop:overflow}
Let \(V_{B,\le W}^\pi\) denote the success probability in the truncated
system that sends overflow states to an absorbing failure state. If
\(\PP^\pi_x(\mathsf{Overflow})\le \delta_W\), then
\[
  V_B^\pi(x)\ge V_{B,\le W}^\pi(x)-\delta_W.
\]
\end{proposition}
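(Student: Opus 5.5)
The plan is to couple the original and truncated systems on a single probability space. Both are run under the same policy \(\pi\) from the same initial state \(x\), using the same policy randomness. The two systems use the same verifier update \(F\) and the same solved set \(\mathsf G\). The only difference is that the truncated system redirects a state with total goal mass exceeding \(W\) to an absorbing failure state. So the trajectories \((X_t)_{t\le B}\) and \((X_t^{\le W})_{t\le B}\) coincide pathwise on the complement of \(\mathsf{Overflow}\). Here we take \(\mathsf{Overflow}\) to be the event in the original process, which is the one whose probability is bounded by \(\delta_W\). Concretely, I would argue by induction on \(t\) that \(X_t=X_t^{\le W}\) for all \(t\le B\) on \(\{\max_{t\le B}X_t(K)\le W\}\). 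No overflow redirection is ever triggered there, and both systems apply the same map \(F\) to the same state and the same action.

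Next, I would record a pathwise inclusion of success events. The event \(\{T^{\le W}_{\mathsf G}\le B\}\) is the event that the truncated run hits \(\mathsf G\) within \(B\) steps. Since the absorbing failure state is not in \(\mathsf G\), on \(\mathsf{Overflow}^c\) this event coincides with \(\{T_{\mathsf G}\le B\}\). Hence
\[
  \{T^{\le W}_{\mathsf G}\le B\}
  \subseteq
  \{T_{\mathsf G}\le B\}\cup \mathsf{Overflow}.
\]
Taking \(\PP_x^\pi\) and using a union bound gives \(V_{B,\le W}^\pi(x)\le V_B^\pi(x)+\PP_x^\pi(\mathsf{Overflow})\le V_B^\pi(x)+\delta_W\), which rearranges to the claim.

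The main obstacle is a subtlety in the definition: the process may reach \(\mathsf G\) and stop before overflow, while \(\mathsf{Overflow}\) is phrased with \(\max_{t\le B}\). I would handle this in one of two ways. The first is to read trajectories as stopped at \(T_{\mathsf G}\), so that later states equal the solved state. The second is to note that the inclusion above only uses agreement of the two runs up to the hitting time. Either way the argument only gets stronger, because any overflow after success does not affect either success event. I would also make explicit the convention that the policy acts identically in both systems on non-overflow states, which is what makes the coupling legitimate.
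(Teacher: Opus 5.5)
Your proposal is correct and follows the paper's own argument: couple the true and truncated runs under the same policy randomness, note that they coincide off $\mathsf{Overflow}$, and charge any difference in success to $\PP_x^\pi(\mathsf{Overflow})\le\delta_W$. One remark: your coupling actually gives the sharper inclusion $\{T^{\le W}_{\mathsf G}\le B\}\subseteq\{T_{\mathsf G}\le B\}$, because the truncated run can only reach $\mathsf G$ before its first overflow, where it agrees with the original run; so the stated inequality holds even without the $-\delta_W$ term, and the overflow bound is what controls the reverse direction $V_B^\pi(x)\le V_{B,\le W}^\pi(x)+\delta_W$.
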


\begin{proof}
Couple the true and truncated processes until the first overflow time.
They coincide on the complement of \(\mathsf{Overflow}\). Since the
truncated process only removes successful trajectories after overflow,
the difference in success probability is at most the overflow
probability.
\end{proof}

This is why the main text only assumes compact concentration sets:
global compactness can be recovered by truncation plus an additive tail
term. In applications, \(W\) can be chosen from empirical proof traces or
from a conservative planner budget; the theorem then reports the price of
ignoring wider proof states through the single tail probability
\(\delta_W\).

\section{Representation Learning Through Certificate Tightness}
\label{app:representation}

This appendix expands the Bellman-certificate discussion in
\cref{sec:implications}.

The theorem suggests a concrete objective for representation learning.
Let \(\Phi\) be a class of embeddings \(\phi\) that map formal goals into a
metric space \(K_\phi\). Each \(\phi\) induces a state representation, a
concentration set \(\mathcal C_{\phi,t}\), and hypothesis classes
\(\calH_{\phi,t}\). One can then choose \(\phi\) to balance
approximation and geometry:
\[
  \min_{\phi\in\Phi}
  \EE_{q_0}\!\left[
    U_{B,\phi}^*(X_0)-L_{B,\phi}^*(X_0)
  \right]
  +
  \lambda\,\mathsf{Comp}(\phi),
\]
where \(U_{B,\phi}^*\) and \(L_{B,\phi}^*\) are the tightest
Bellman-based upper and lower certificates available inside a restricted
function class, and \(\mathsf{Comp}(\phi)\) penalizes statistically
unfriendly geometries such as large covering dimension or extreme
Lipschitz distortion.

This viewpoint formalizes a common intuition: a good representation is
not merely predictive, but one under which provability certificates are
tight and statistically estimable. The certificate gap
\(U_{B,\phi}^*(X_0)-L_{B,\phi}^*(X_0)\) measures how much uncertainty
remains about finite-budget provability after restricting attention to
the surrogate family induced by \(\phi\). The complexity penalty then
prevents choosing a representation that makes certificates expressive but
statistically impossible to estimate from verifier-backed traces. This
criterion is deliberately aligned with \cref{thm:main}: the same
representation should reduce approximation error, improve local coverage,
and preserve the action margins that make greedy proving stable.

\section{Scaling-Law Interpretation}
\label{app:scaling}

This appendix expands the scaling-law viewpoint in
\cref{sec:implications,sec:easy-hard}.

Ignoring logarithmic factors and approximation error, the depth-wise
statistical error behaves like
\[
  \eps_t \approx n_t^{-1/(d_t+2)}.
\]
Plugging this into \cref{eq:main-occ} gives a schematic sample-complexity
relationship
\[
  \mathrm{SP}_B^*(q_0)-\mathrm{SP}_B^{\hat\pi}(q_0)
  \lesssim
  \sum_{s=1}^B
  \PP_{q_0}^{\hat\pi}(T_{\mathsf G}\ge s)\,
  n_{B-s+1}^{-1/(d_{B-s+1}+2)}.
\]
Under a margin condition, the exponent improves to
\((\beta+1)/(d_t+2)\).

Three qualitative scaling levers become explicit. First, if
decomposition or lemma introduction reduces the average proof length,
the probabilities \(\PP_{q_0}^{\hat\pi}(T_{\mathsf G}\ge s)\) decay
earlier and the sum has fewer large terms. Second, if retrieval or representation learning
localizes the relevant state-action pairs to a lower-dimensional region,
then the effective dimension \(d_t\) decreases and the rate in \(n_t\)
improves. Third, if verifier feedback and better scoring make near-ties
rarer, the margin exponent \(\beta\) increases and the fast-rate regime
becomes more favorable. These levers correspond to different engineering
interventions, but the bound puts them on the same scale: each reduces
the value-estimation error accumulated along the learned prover's
occupied trajectory. This is the statistical sense in which agentic
components can produce large practical gains without changing worst-case
logical hardness.

\section{Extended Related Work}
\label{app:literature}

This appendix extends the related-work discussion in \cref{sec:related}.

\paragraph{Before the LLM era: tactic and premise guidance.}
Long before current reasoning models, theorem proving research already
recognized two bottlenecks: choosing useful premises from a large
library and choosing the next proof step so that symbolic search does
not explode combinatorially. TacticToe
\citep{Gauthier2017tactictoe,Gauthier2021} is representative: it learns
tactic guidance from existing proofs and combines those predictions with
Monte Carlo tree search. HOList and DeepHOL
\citep{Bansal2019holist} established a large-scale environment for
higher-order theorem proving and made reinforcement-learning and
imitation-learning approaches easier to compare. These systems already
fit the MDP perspective quite naturally: proof states evolve under
discrete actions, learning affects search bias, and evaluation is
budgeted success rather than logical completeness.

\paragraph{Datasets and environments.}
Infrastructure changed the field by making proof interaction
reproducible. LeanDojo \citep{yang2023leandojo} exposed Lean proof
states, tactics, and premise annotations through a programmatic API,
which made retrieval-augmented and verifier-interactive learning much
more practical. miniF2F \citep{zheng2022miniff} provided a cross-system
benchmark built around olympiad-style mathematics and became a common
testing ground for neural and LLM-based formal provers. This benchmark
orientation is relevant to our theory because it sharpened the need for
a distributional notion of success rather than a purely worst-case one.

\paragraph{Neural language models for formal proof.}
GPT-f \citep{Polu2020gen} was an early signal that generative language
models could meaningfully contribute to formal proof search. The shift
from fixed handcrafted features to learned sequence or graph
representations changed both training and inference. Rather than only
predicting the next rule locally, models could propose longer fragments,
re-rank retrieved lemmas, and use verifier feedback to refine failed
attempts. This made theorem proving look less like conventional search
with a static heuristic and more like a dynamic loop of proposal,
verification, and revision.

\paragraph{Agentic provers and the decomposition trend.}
Modern systems increasingly treat theorem proving as a structured
workflow with multiple interacting components. DSP
\citep{jiang2023draft} uses informal proof sketches to steer formal
search toward easier subproblems. DeepSeek-Prover and
DeepSeek-Prover-V2 \citep{xin2024deepseekprover,Ren2025deepseekprover-v2}
emphasize training data generation, recursive decomposition, and RL for
Lean~4 proving. Prover Agent \citep{baba2025proveragent} explicitly
separates roles such as informal reasoning and formal proving. Seed
Prover \citep{bytedance2025seedprover} centers lemma-style whole-proof
reasoning with iterative refinement using Lean feedback. Hilbert
\citep{varambally2025hilbert} and Aristotle
\citep{harmonic2025aristotle} also combine informal reasoning, explicit
intermediate lemmas, and proof-assistant checking. The common pattern is
that decomposition and lemma reuse are no longer auxiliary heuristics;
they are first-class levers for changing the effective proof geometry.

\paragraph{Test-time scaling and verifiers.}
Chain-of-thought prompting \citep{kojima2022cot} and subsequent theory
\citep{feng2023cot-theory,phan2023cot} highlighted that inference-time
computation is part of the effective hypothesis class. In parallel,
recent work has studied test-time scaling more broadly, including how
performance depends on adaptive allocation of compute
\citep{wu2024scaling}. Best-of-\(N\) and verifier-guided inference
analyses \citep{beirami2025bon,setlur2025tts-verifier,botta2025bon}
emphasize that verifier quality can change scaling laws. In theorem
proving, the proof assistant is a nearly perfect verifier for formal
correctness, but agents still rely on imperfect learned signals for
premise usefulness, subgoal quality, or semantic progress. Our theory is
meant to capture this interaction between perfect syntactic checking and
imperfect statistical scoring.

\paragraph{Interactive learning and computational constraints.}
From a broader ML perspective, theorem proving sits inside a long line
of work where learning is driven by interaction rather than passive
i.i.d.\ examples. Angluin's \(L^\ast\) algorithm
\citep{angluin1987Lstar} is a canonical example: the learner uses
membership and equivalence queries to identify a target automaton.
Recent work such as epiplexity \citep{finzi2026epiplexity} and surveys
on formal languages and transformer expressivity
\citep{strobl2024formal-survey} further emphasize that computational
constraints matter when translating information into performance. Our
position is that formal theorem proving requires both viewpoints at
once: proof-theoretic structure determines what short successful
trajectories exist, while statistical learning determines whether an
agent can identify and exploit those trajectories from realistic data.

\end{document}